\newcommand{\reals}{\mathbb{R}}
\newcommand{\inner}[1]{\langle #1 \rangle}
\newcommand{\norm}[1]{\| #1 \|}
\newcommand{\mean}[2]{\mathbb{E}_{#1} \left[ #2 \right]}
\newcommand{\wvec}[1]{\bm{w}^{(#1)}}
\newcommand{\kvec}[1]{\bm{k}^{(#1)}}
\newcommand{\xvec}[1]{\bm{x}^{(#1)}}
\newcommand{\p}{^{\prime}}
\newtheorem{theorem}{Theorem}
\newtheorem{lemma}{Lemma}
\newtheorem{corollary}{Corollary}
\newcommand{\BlackBox}{\rule{1.5ex}{1.5ex}}  
\newenvironment{proof}{\par\noindent{\bf Proof\ }}{\hfill\BlackBox\\}
\newcommand{\algref}[1]{Algorithm~\ref{#1}}
\newcommand{\lemref}[1]{Lemma~\ref{#1}}
\newcommand{\figref}[1]{Figure~\ref{#1}}
\newcommand{\thmref}[1]{Theorem~\ref{#1}}
\newcommand{\crlref}[1]{Corollary~\ref{#1}}
\newcommand{\appref}[1]{Appendix~\ref{#1}}
\newcommand{\secref}[1]{Section~\ref{#1}}
\DeclareMathOperator{\sign}{sign}
\title{A Provably Correct Algorithm for Deep Learning that
  Actually Works}
\author{
  Eran Malach
  }
\author{
  Shai Shalev-Shwartz
  }
\affil{School of Computer Science, The Hebrew University, Israel}
\date{}
\begin{document}

\maketitle
\thispagestyle{empty}

\begin{abstract}
We describe a layer-by-layer algorithm for training deep convolutional
networks, where each step involves gradient updates for a two layer
network followed by a simple clustering algorithm. Our algorithm stems
from a deep generative model that generates images level by level,
where lower resolution images correspond to latent semantic
classes. We analyze the convergence rate of our algorithm assuming
that the data is indeed generated according to this model (as well as
additional assumptions). While we do not pretend to claim that the
assumptions are realistic for natural images, we do believe that they
capture some true properties of real data. Furthermore, we show that
our algorithm actually works in practice (on the CIFAR dataset), achieving
results in the same ballpark as that of vanilla convolutional neural
networks that are being trained by stochastic gradient
descent. Finally, our proof techniques may be of independent interest. 
\end{abstract}

\section{Introduction}
The success of deep convolutional neural networks (CNN) has sparked
many works trying to understand their behavior.  We can roughly
separate these works into three categories: First, the majority of the
works focus on providing various optimization methods and algorithms
that prove well in practice, but have almost no theoretical
guarantees. A second class of works focuses on analyzing practical
algorithms (mostly SGD), but under strong assumptions on the data
distribution, like linear separability or sampling from Gaussian
distribution, that often make these problems trivially solvable by
much simpler algorithms.  A third class of works takes less
restrictive assumptions on the data, provides strong theoretical
guarantees, but these guarantees hold for
algorithms that don't really work in practice.

In this work, we study a new algorithm for learning deep convolutional
networks, assuming the data is generated from some deep generative
model.  This model assumes that the examples are generated in a
hierarchical manner: each example (image) is generated by first
drawing a high-level semantic image, and iteratively refining the
image, each time generating a lower-level image based on the
higher-level semantics from the previous step. Similar models were
suggested in other works as good descriptions of natural images
encountered in real world data. These works, although providing
important insights, suffer from one of two major shortcomings: they
either suggest algorithms that seem promising for practical use, but
without any theoretical guarantees, or otherwise provide algorithms
with sound theoretical analysis that seem far from being applicable
for learning real-world data.

Our work achieves promising results in the following sense: first, we
show an algorithm along with a complete theoretical analysis, proving
it's convergence under the assumed generative model (as well as
additional, admittedly strong, assumptions). Second, we show that
implementing the algorithm to learn real-world data achieves
performance that are in the same ballpark as the popular CNN trained
with SGD-based optimizers. Third, the problem on which we apply our
algorithm is not trivially learned by simple ``shallow'' learning
algorithms. The main achievement of this paper is succeeding in all of
these goals together. As is usually the case in tackling hard
problems, our theoretical analysis makes strong assumptions on the
data distribution, and we clearly state them in our
analysis. Nevertheless, the resulting algorithm works on real data
(where the assumptions clearly do not hold).
That said, we do not wish to claim that such algorithm achieves state-of-the-art
results, and hence did not apply many of the common ``tricks'' that are 
used in practice to train a CNN, but rather compared
our algorithm to an ``out-of-the-box'' SGD-based optimization.

\section{Related Work}

As mentioned, we can roughly divide the works relevant to the scope of
this paper into three categories: (1) works that study practical algorithms (SGD) solving ``simple'' problems that can be otherwise learned with
``shallow'' algorithms. (2) works that study problems with less restrictive
assumptions, but using algorithms that are not applicable in practice.
(3) works that study a generative model similar to ours,
but either give no theoretical guarantees, or otherwise analyze
an algorithm that is ``tailored'' to learning the generative model,
and seems very far from algorithms used in practice.

Trying to study a practically useful algorithm, \cite{daniely2017sgd}
proves that SGD learns a function that approximates the best function
in the conjugate kernel space derived from the network
architecture. Although this work provides guarantees for a wide range
of deep architectures, there is no empirical evidence that the best
function in the conjugate kernel space performs at the same ballpark
as CNNs.  The work of \cite{andoni2014learning} shows guarantees on
learning low-degree polynomials, which is again learnable via SVM or
direct feature mapping. Other works study shallow (one-hidden-layer)
networks under some significant assumptions.  The works of
\cite{gori1992problem, brutzkus2017sgd} study the convergence of SGD
trained on linearly separable data, which could be learned with the
Perceptron algorithm, and the works of \cite{brutzkus2017globally,
  tian2017analytical,li2017convergence,zhong2017recovery} assume that
the data is generated from Gaussian distribution, an assumption that
clearly does not hold in real-world data. The work of
\cite{du2017convolutional} extends the results in
\cite{brutzkus2017globally}, showing recovery of convolutional kernels
without assuming Gaussian distribution, but is still limited to the
regime of shallow two-layer network.

Another line of work aims to analyze the learning of deep architectures,
in cases that exceed the capacity of shallow learning. The works of 
\cite{livni2014computational, zhang2015learning,zhang2016l1}
show polynomial-time algorithms aimed at learning deep models,
but that seem far from performing well in practice.
The work of \cite{zhang2016convexified} analyses a method of learning
a model similar to CNN which can be applied to learn multi-layer networks,
but the analysis is limited to shallow two-layer settings,  when
the formulated problem is convex.

Finally, there have been a few works suggesting distributional
assumptions on the data that are similar in spirit to the generative
model that we analyze in this paper. Again, these works can be largely
categorized into two classes: works that provide algorithms with
theoretical guarantees but no practical success, and works that show
practical results without theoretical guarantees. The work of
\cite{arora2014provable} shows a provably efficient algorithm for
learning a deep representation, but this algorithm seems far from
capturing the behavior of algorithms used in practice. Our approach
can be seen as an extension of the work of \cite{mossel2016deep}, who
studies Hierarchal Generative Models, focusing on algorithms and
models that are applicable to biological data. \cite{mossel2016deep}
suggests that similar models may be used to define image refinement
processes, and our work shows that this is indeed the case, while
providing both theoretical proofs and empirical evidence to this
claim.  Finally, the works of
\cite{tang2012deep,patel2016probabilistic,van2014factoring} study
generative models similar to ours, with promising empirical results
when implementing EM inspired algorithms, but giving no theoretical
foundations whatsoever.  

\section{Generative Model}
\label{sec:gen_model}
We begin by introducing our generative model. This model is based on
the assumption that the data is generated in a hierarchical
manner. For each label, we first generate a high-level semantic
representation, which is simply a small scale image, where each
``pixel'' represents a semantic class (in case of natural images,
these classes could be: background, sky, grass etc.).  From this
semantic image, we generate a lower level image, where each patch
comes from a distribution depending on each ``pixel'' of the
high-level representation, generating a larger semantic image (lower
level semantic classes for natural images could be: edges, corners,
texture etc.).  We can repeat this process iteratively any number of
times, each time creating a larger image of lower level semantic
classes.  Finally, to generate a greyscale or RGB image, we assume
that the last iteration of this process samples patches over
$\reals$. This model is described schematically in
\figref{fig:generative_model}, with a formal description given in
\secref{sec:full_model}. \secref{sec:synthetic} describes a synthetic
example of digit images generated according to this model.
\subsection{Formal Description}
\label{sec:full_model}

To generate an example, we start with sampling the label
$y \sim U(\mathcal{Y})$, where $U(\mathcal{Y})$ is the uniform
distribution over the set of labels. Given $y$, we generate a small
image with $m$ pixels, where each pixel belongs to a set
$\mathcal{C}_0$. Elements of $\mathcal{C}_0$ corresponds to semantic
entities (e.g. ``sky'', ``grass'', etc.). The generated image, denoted
$x^{(0)} \in \mathcal{C}_0^m$, is sampled according to some simple
distribution $\mathcal{D}_y$ (to be defined later). Next, we generate a new image,
$x^{(1)} \in \mathcal{C}_1^{ms}$ as follows. Pixel $i$ in $x^{(0)}$
corresponds to some $c \in \mathcal{C}_0$. For every such $c$, there
is a distribution $\mathcal{D}_c$ over $\mathcal{C}_1^s$, where we
refer to $s$ as a ``patch size''. So, pixel $i$ in  $x^{(0)}$ whose
value is $c \in \mathcal{C}_0$ generates a patch of size $s$ in
$x^{(1)}$ by sampling the patch according to $\mathcal{D}_c$. This
process continues, which yields images $x^{(2)}, \ldots, x^{(k)}$
whose sizes are $ms^2, \ldots, ms^k$, where each pixel in layer $i$
comes from $\mathcal{C}_i$. We assume that $\mathcal{C}_k = \reals$,
hence the final image is over the reals. The resulting example is the
pair $(x^{(k)},y)$. We denote the distribution generating the image
of level $i$ by $\mathcal{G}_i$.

\begin{figure}
\center
\begin{tikzpicture}[scale=0.6, every node/.style={scale=0.6}]

\draw [black] (-0.5,5) node[] {$y$};

\draw [->, decorate, decoration={
    zigzag,
    segment length=4,
    amplitude=.9,post=lineto,
    post length=2pt
}] (0.0,5) -- (0.7,5) node[pos=0.5,above=5] {$\mathcal{D}_{y}$};

\draw [black] (2.0,2.5) node[rotate=270] {$\in$};
\draw [black] (2.0,2) node[] {$\mathcal{C}_0^m$};
\draw [black, fill=red] (1.0,5) rectangle (2.0,6) node[pos=.5] {\fontsize{10pt}{0}$x^{(0)}_{1}$};
\draw [black, fill=green] (1.0,4) rectangle (2.0,5) node[pos=.5] {\fontsize{10pt}{0}$x^{(0)}_{2}$};
\draw [black, fill=orange] (2.0,5) rectangle (3.0,6) node[pos=.5] {\fontsize{10pt}{0}$x^{(0)}_{3}$};
\draw [black, fill=yellow] (2.0,4) rectangle (3.0,5) node[pos=.5] {\fontsize{10pt}{0}$x^{(0)}_{4}$};

\draw [->, decorate, decoration={
    zigzag,
    segment length=4,
    amplitude=.9,post=lineto,
    post length=2pt
}] (3.5,5) -- (4.5,5) node[pos=0.5,above=5] {$\mathcal{G}_{1}$};

\def\x{5.0}
\def\y{3.0}
\def\s{2.0}

\draw [black] (\x+2,2.5) node[rotate=270] {$\in$};
\draw [black] (\x+2,2) node[] {$\mathcal{C}_1^{ms}$};
\draw [black, fill=red] (\x,\y+\s) rectangle (\x+\s,\y+2*\s);
\draw [black, fill=green] (\x,\y) rectangle (\x+\s,\y+\s) node[pos=.5] {\fontsize{10pt}{0}$\dots$};
\draw [black, fill=orange] (\x+\s,\y+\s) rectangle (\x+2*\s,\y+2*\s) node[pos=.5] {\fontsize{10pt}{0}$\dots$};
\draw [black, fill=yellow] (\x+\s,\y) rectangle (\x+2*\s,\y+\s) node[pos=.5] {\fontsize{10pt}{0}$\dots$};

\def\y{5.0}
\def\s{1.0}

\draw [black] (\x,\y+\s) rectangle (\x+1,\y+2*\s) node[pos=.5] {\fontsize{10pt}{0}$x^{(1)}_{1}$};
\draw [black] (\x,\y) rectangle (\x+\s,\y+\s) node[pos=.5] {\fontsize{10pt}{0}$x^{(1)}_{2}$};
\draw [black] (\x+\s,\y+\s) rectangle (\x+2*\s,\y+2*\s) node[pos=.5] {\fontsize{10pt}{0}$x^{(1)}_{3}$};
\draw [black] (\x+\s,\y) rectangle (\x+2*\s,\y+\s) node[pos=.5] {\fontsize{10pt}{0}$x^{(1)}_{4}$};

\def\x{8}

\draw [black] (\x+2.0,5) node {$\rightsquigarrow \dots \rightsquigarrow$};

\end{tikzpicture}

\caption{Generative model schematic description}\label{fig:generative_model}
\end{figure}

\subsection{Synthetic Digits Example}
\label{sec:synthetic}
To demonstrate our generative model, we use a small synthetic example to
generate images of digits. In this case, we use a three levels model,
where semantic classes represent lines, corners etc.
In the notations above, we use:
\begin{align*}
&\mathcal{C}_0 = \{
\includegraphics[scale=0.02]{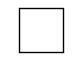},
\includegraphics[scale=0.02]{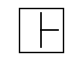},
\includegraphics[scale=0.02]{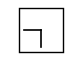},
\includegraphics[scale=0.02]{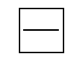},
\includegraphics[scale=0.02]{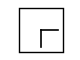},
\includegraphics[scale=0.02]{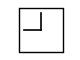},
\includegraphics[scale=0.02]{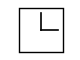},
\includegraphics[scale=0.02]{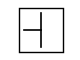},
\includegraphics[scale=0.02]{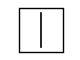}
\}
~~,~~ \mathcal{C}_1 = \{
\includegraphics[scale=0.02]{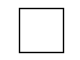},
\includegraphics[scale=0.02]{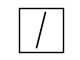},
\includegraphics[scale=0.02]{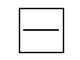},
\includegraphics[scale=0.02]{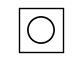},
\includegraphics[scale=0.02]{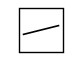},
\includegraphics[scale=0.02]{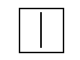}
\} ~~,~
&\mathcal{C}_2 = \reals
\end{align*}
We define the distributions
$\mathcal{D}_0, \dots, \mathcal{D}_9$
to be the distributions concentrated on the equivalent digital
representation:
\raisebox{-.35\height}{\includegraphics[scale=0.05]{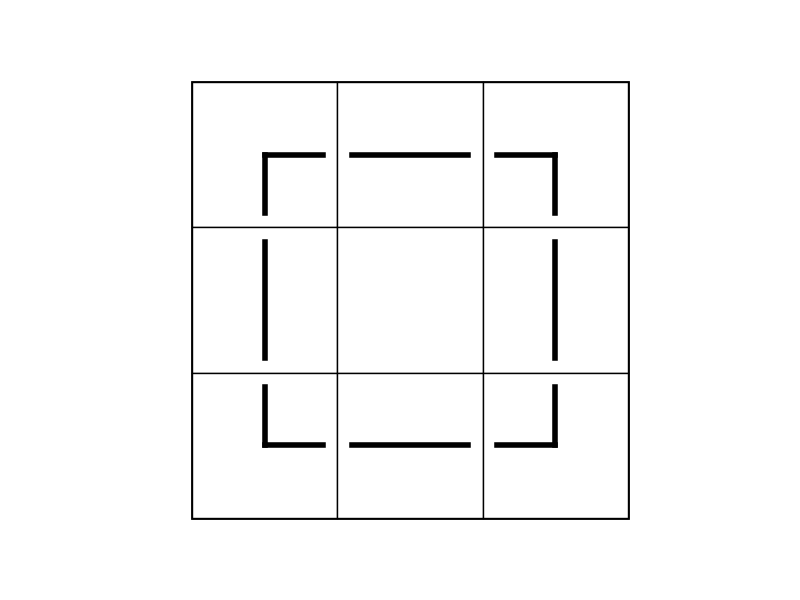}}~,
\raisebox{-.35\height}{\includegraphics[scale=0.05]{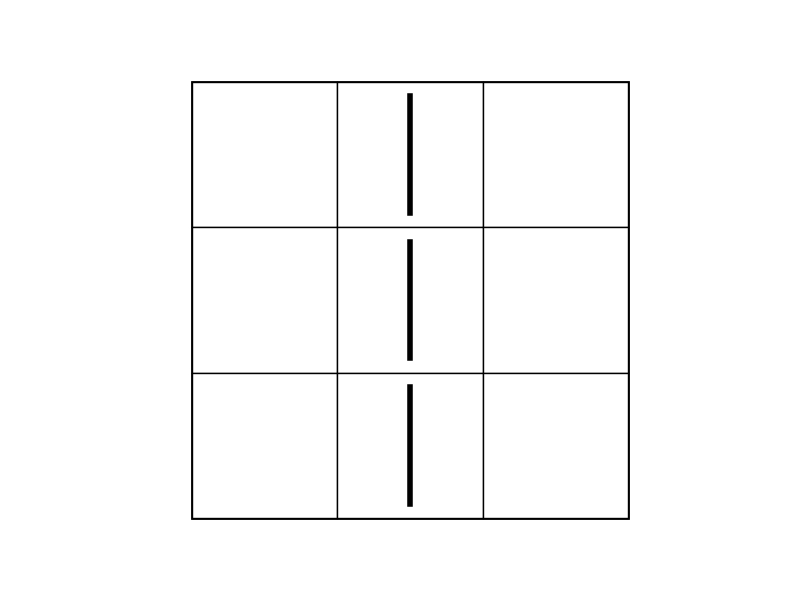}}~,
\raisebox{-.35\height}{\includegraphics[scale=0.05]{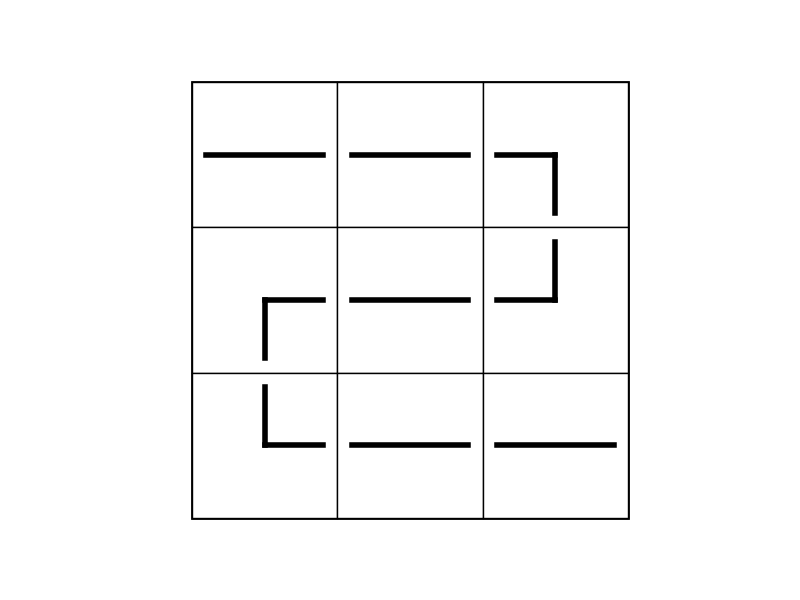}}~,
\raisebox{-.35\height}{\includegraphics[scale=0.05]{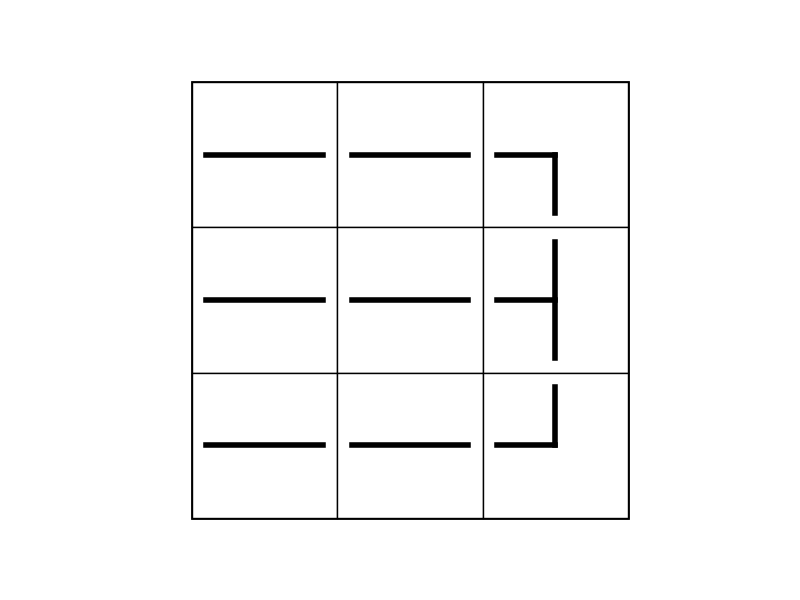}}~,
\raisebox{-.35\height}{\includegraphics[scale=0.05]{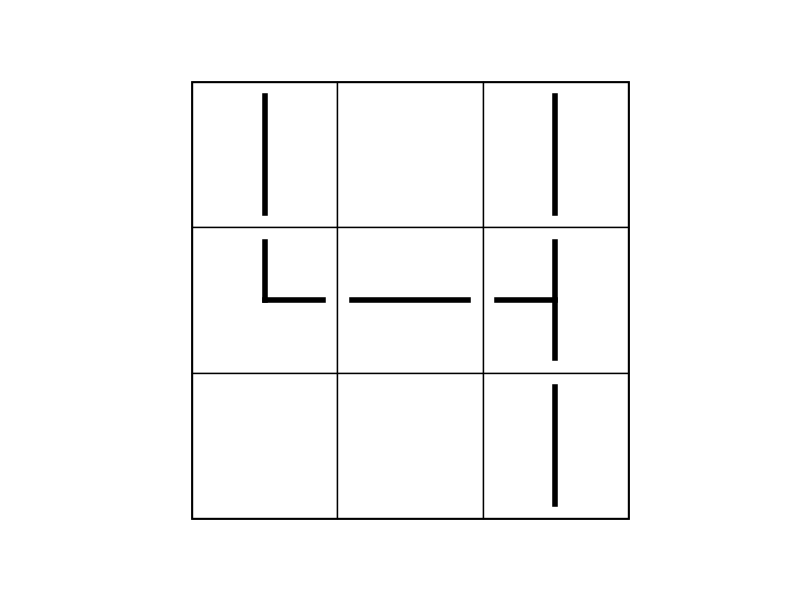}}~,
\raisebox{-.35\height}{\includegraphics[scale=0.05]{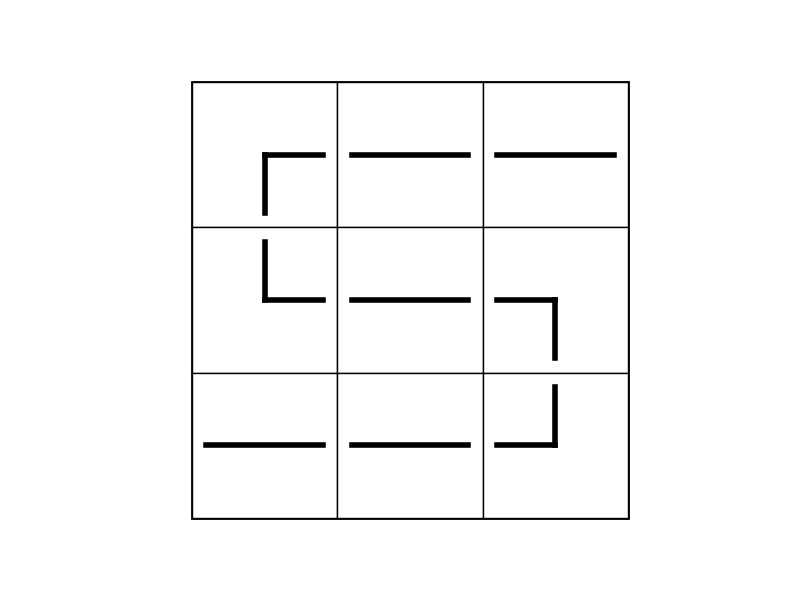}}~,
\raisebox{-.35\height}{\includegraphics[scale=0.05]{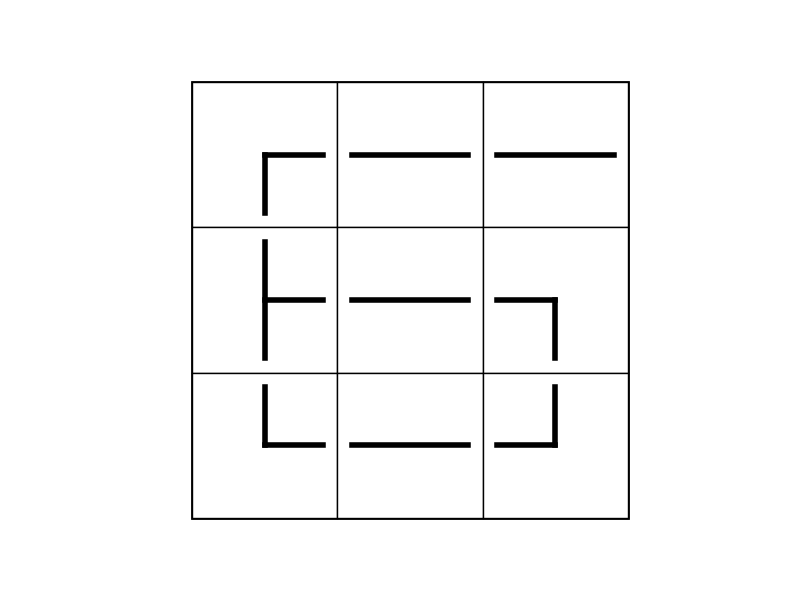}}~,
\raisebox{-.35\height}{\includegraphics[scale=0.05]{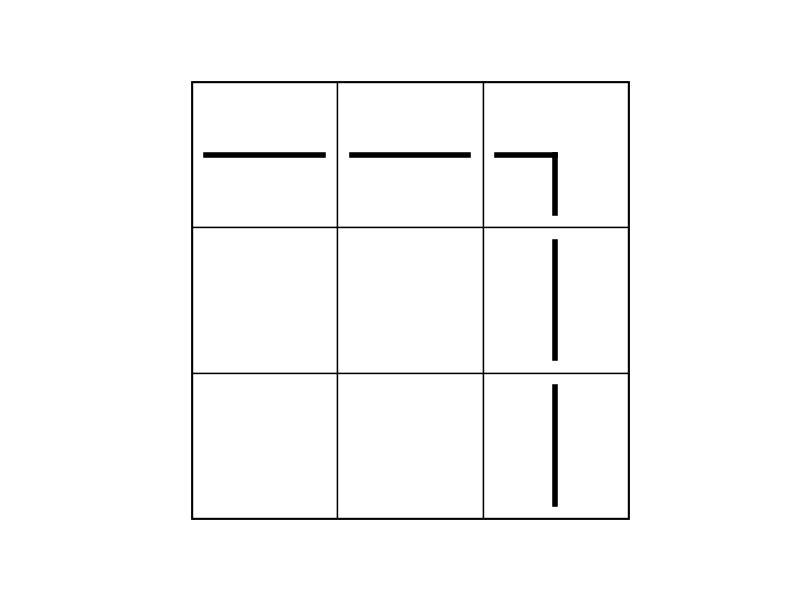}}~,
\raisebox{-.35\height}{\includegraphics[scale=0.05]{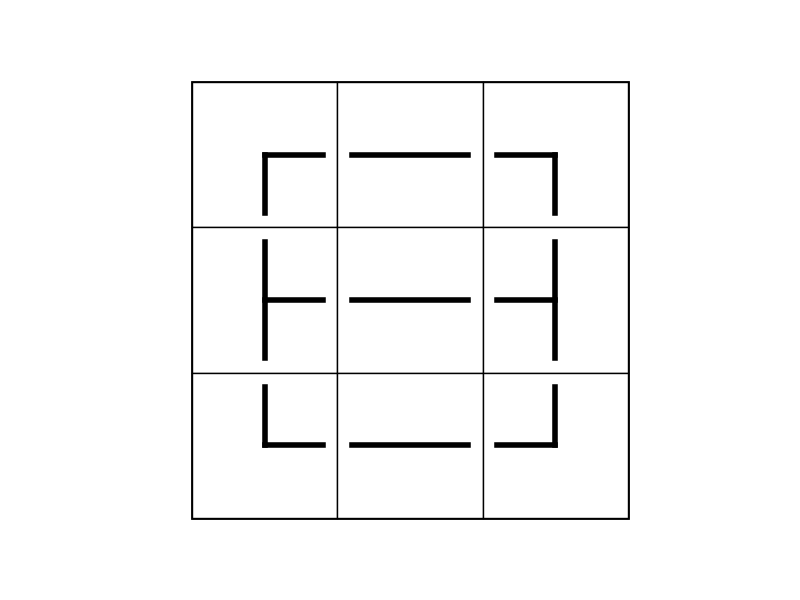}}~,
\raisebox{-.35\height}{\includegraphics[scale=0.05]{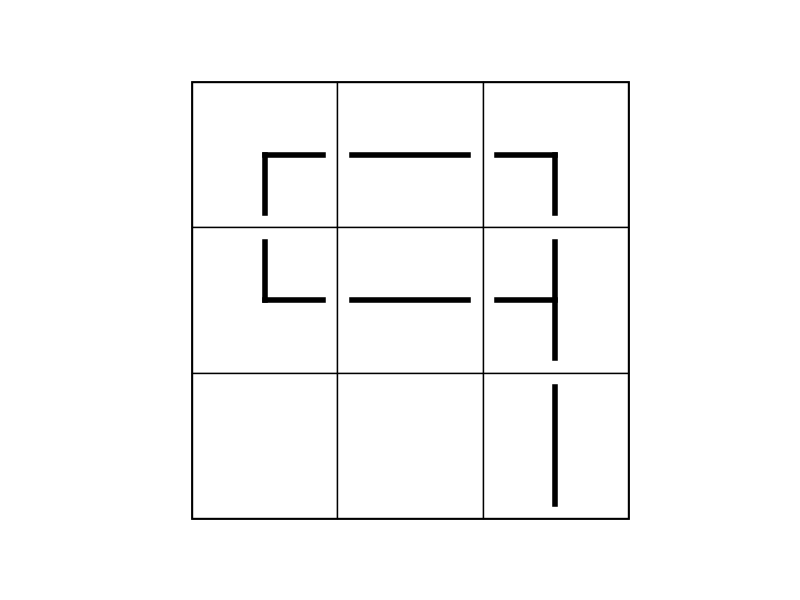}}~.

Now, in the second level of the generative model, each pixel
in $\mathcal{C}_0$ can generate one of four possible manifestations. For example, for the pixel
\raisebox{-.2\height}{\includegraphics[scale=0.03]{fig_sym0_2}}, we sample over:
\raisebox{-.35\height}{\includegraphics[scale=0.05]{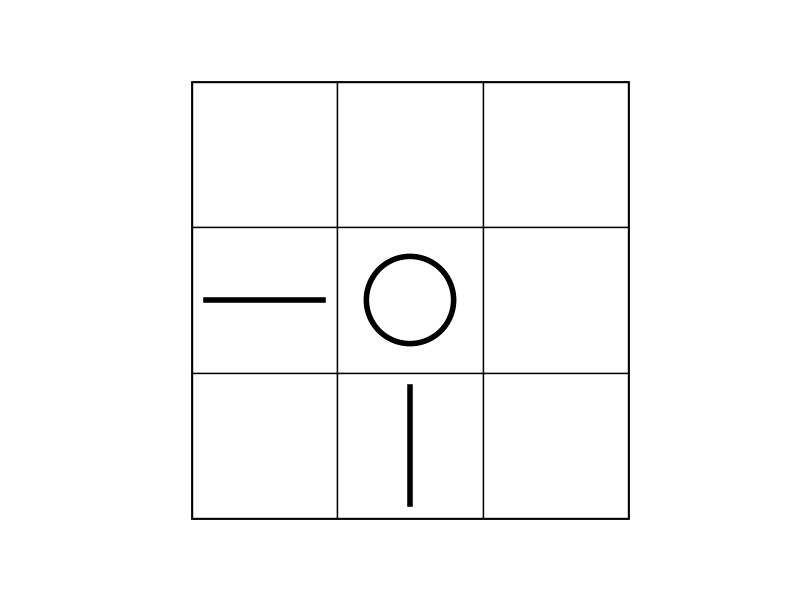}}~,
\raisebox{-.35\height}{\includegraphics[scale=0.05]{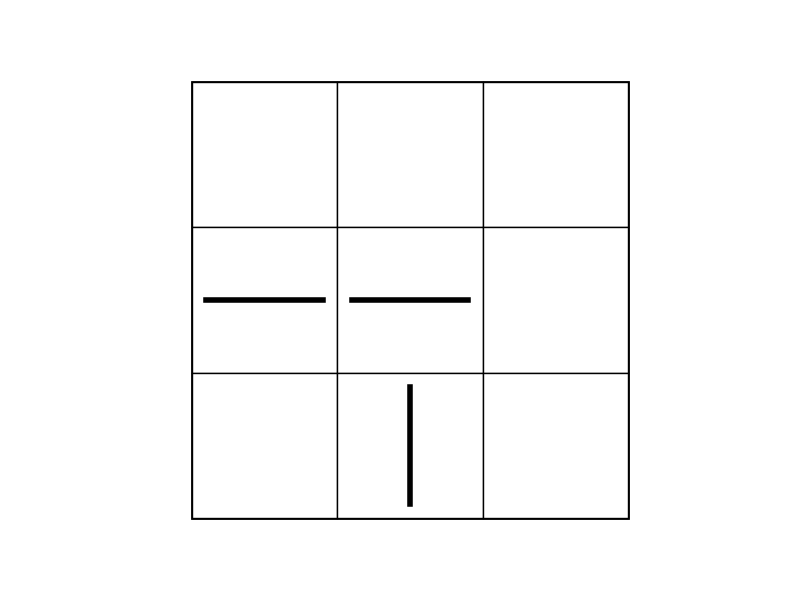}}~,
\raisebox{-.35\height}{\includegraphics[scale=0.05]{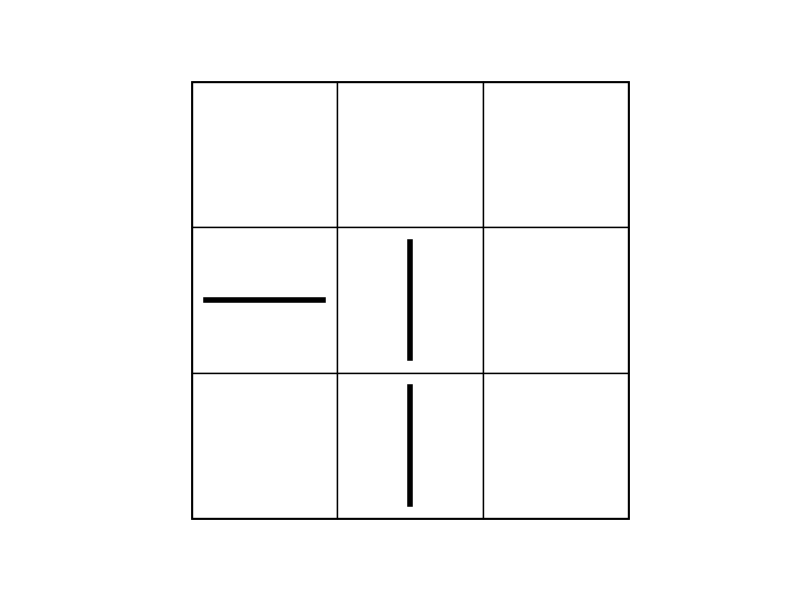}}~,
\raisebox{-.35\height}{\includegraphics[scale=0.05]{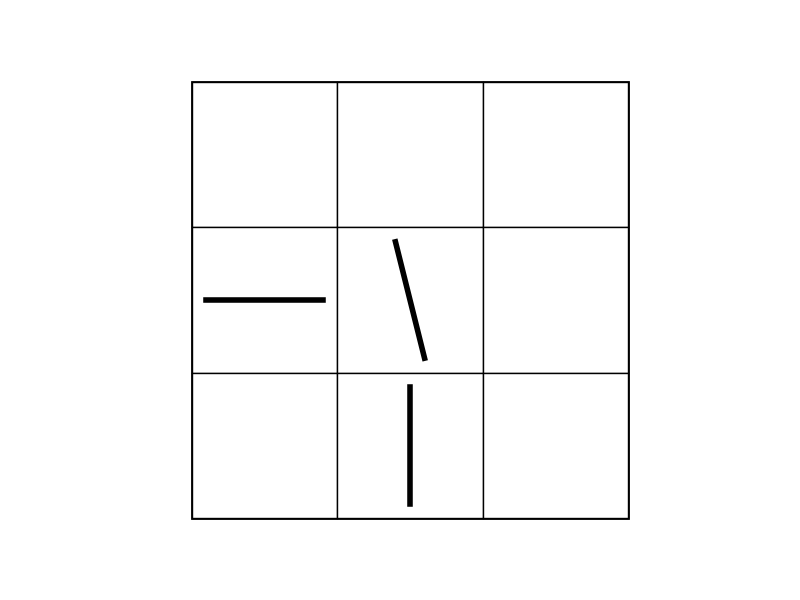}}~.
Similarly, in the final level we sample for each
$c \in \mathcal{C}_1$ from a distribution $\mathcal{D}_c$
supported over 4 elements. For example, for the pixel
\raisebox{-.2\height}{\includegraphics[scale=0.03]{fig_sym1_5}}, we sample over:
\raisebox{-.35\height}{\includegraphics[scale=0.05]{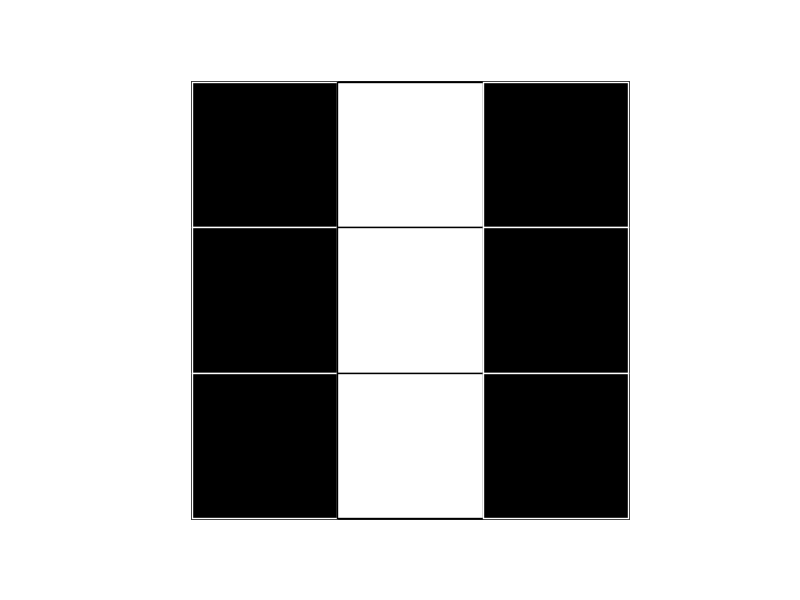}} ~,
\raisebox{-.35\height}{\includegraphics[scale=0.05]{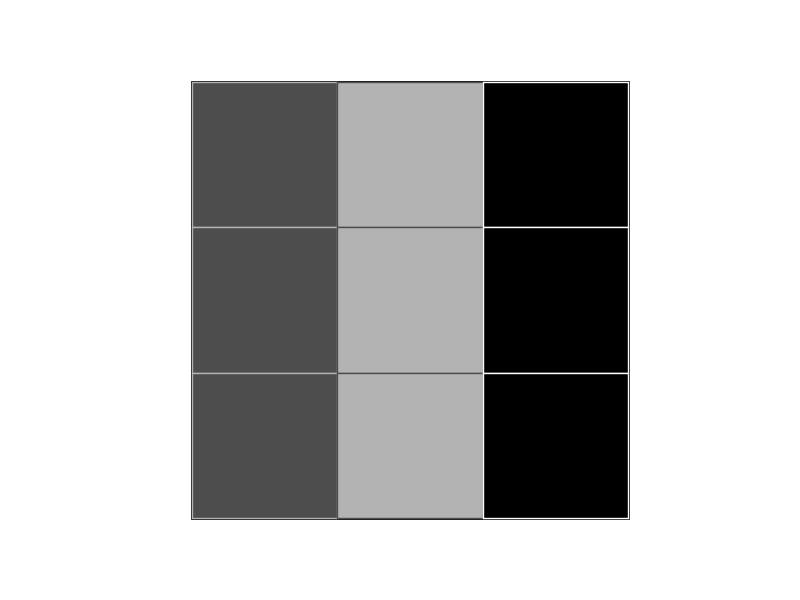}} ~,
\raisebox{-.35\height}{\includegraphics[scale=0.05]{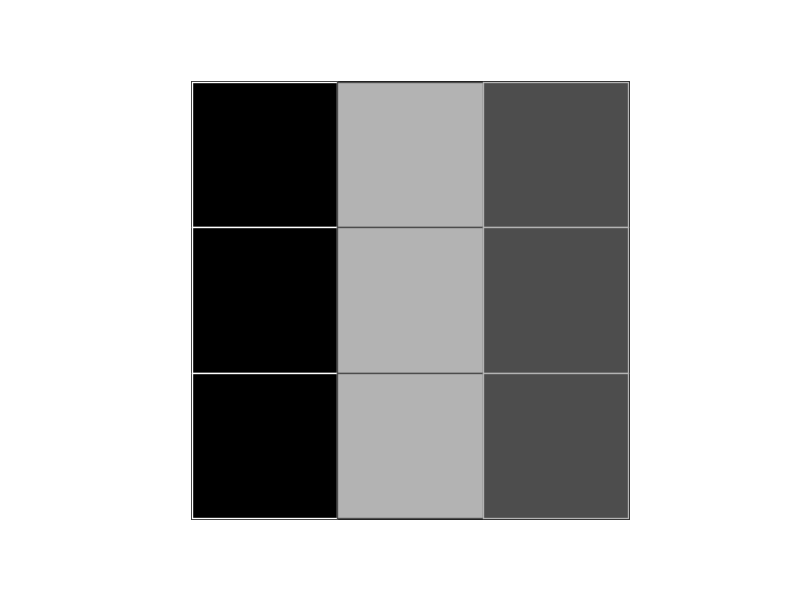}} ~,
\raisebox{-.35\height}{\includegraphics[scale=0.05]{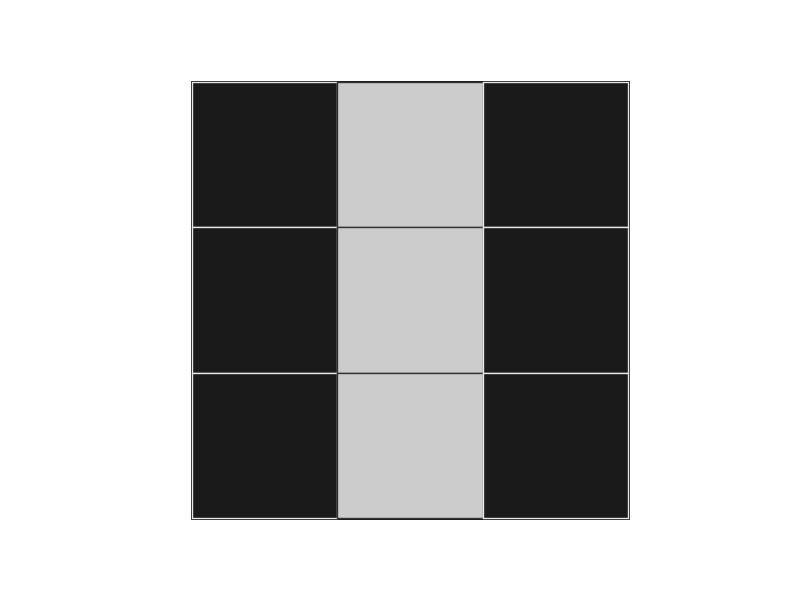}}~.

Notice that though this example is extremely simplistic,
it can generate $4^9$ examples per digit in the first
level, and $4^{90}$ examples for each digit in the final layer,
amounting to $9\cdot4^{90} \approx 1.38\cdot 10^{55}$ different examples.
Figure \ref{fig:synth_visualization} shows the process output.

\begin{figure}
\begin{tabular}{l@{\hspace{0.1cm}}r}
\includegraphics[scale=0.3]{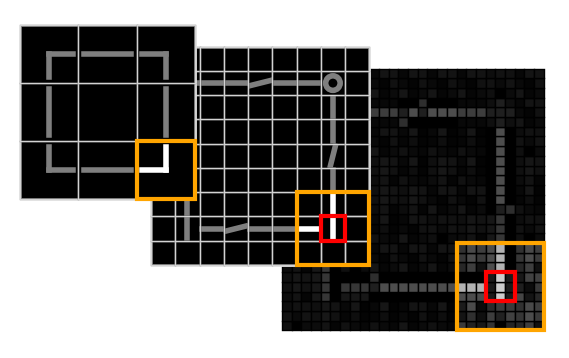} & 
\raisebox{0.1cm}{\includegraphics[scale=0.2]{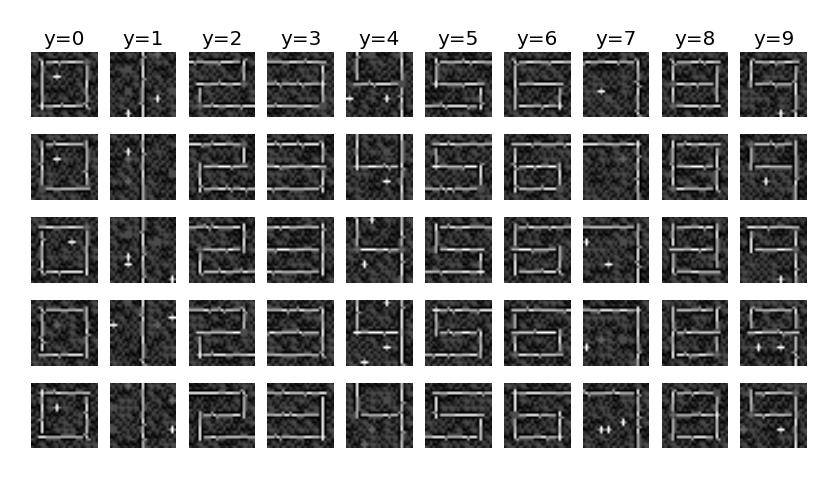}}
\end{tabular}
\caption{Left: Image generation process example. Right: Synthetic
  examples generated.} \label{fig:synth_visualization}
\end{figure}

\section{Algorithm}
Assume we are given data from the generative distribution described in
Section \ref{sec:gen_model}, our goal is to learn a classifier that
predicts the label for each image. A natural approach would be to try to learn
for each low-level patch, the semantic class (in the higher-level semantic image) from which it was generated. This way, we could cluster together
semantically related patches, exposing the higher-level semantic image
that generated the lower-level image. If we succeed in doing so multiple times,
we can infer the topmost semantic image in the hierarchy.
Assuming the high-level distribution $\mathcal{G}_0$ is simple enough
(for example, a linearly separable
distribution with respect to some embedding of the classes), 
we could then use a simple classification algorithm on the high-level image
to infer its label.

Unfortunately, we cannot learn these semantic classes directly
as we are not given access to the latent semantic images,
but only to the lowest-level image generated by the model.
To learn these classes, we use a combination of a simple clustering algorithm
and a gradient-descent based algorithm that learns a single layer
of a convolutional neural network. Surprisingly, as we show in the theoretical
section, the gradient-descent
finds an embedding of the patches such that patches from the same class
are close to each other, while patches from different classes are far away.
The clustering step then clusters together patches from the same class.

\subsection{Algorithm Description}
The algorithm we suggest is built from three building-blocks composed together to construct the full algorithm: (1) clustering
algorithm, (2) gradient-based optimization of two-layer Conv net
and (3) a simple classification algorithm.
In order to expose the latent representation of each layer
in the generative model, we perform the following iteratively: \\
\textbf{(1)} Run a centroid-based clustering algorithm on the patches
of size $\sqrt{s}\times \sqrt{s}$ from the input image
defined by the previous step (or the original image in the first
step),  w.r.t. the cosine distance, to get $\ell$ cluster centers. \\
\textbf{(2)} Run a convolution operation with the cluster centroids as
kernels, followed by ReLU with a fixed bias and a pooling operation.
This will result in
mapping the patches in the input images to (approximately)
orthogonal vectors in an intermediate space $\reals^\ell$. \\
\textbf{(3)} Initialize a 1x1 convolution operation, that maps from $\ell$
  channels into $n$ channels, followed by 
a linear layer that will output $|\mathcal{Y}|$ channels
(where it's input is the $n \times m \times m$
tensor flattened into a vector).
We train this two-layer subnet using a gradient-based
optimization method. As we show in the analysis,
this step implicitly learns an embedding of the patches
into a space where patches from the same semantic class
are close to each other, while patches from different
classes are far away, hence laying the ground for the clustering step
of the next iteration. \\
\textbf{(4)} ``Throw'' the last linear layer, thus leaving a trained
block of Conv$(\sqrt{s}\times \sqrt{s})$-ReLU-Pool-Conv$(1 \times 1)$ which
finds a ``good'' embedding of the patches of the input image,
and repeat the process again, where the output of this
block is the input to step 1.

Finally, after we perform this process for $k$ times,
we get a network of depth $k$ composed from 
Conv$(\sqrt{s}\times \sqrt{s})$-ReLU-Pool-Conv$(1 \times 1)$ blocks. Then, we feed
the output of this (already trained) network to some classifier,
training it to infer the label $y$ from the semantic representation that the convolution network outputs. This
training is done again using a gradient-based optimization
algorithm.
We now describe the building blocks for the algorithm,
followed by the definition of the complete algorithm.

\subsubsection{Clustering}
The first block of the algorithm is the clustering step.
We denote $\textrm{CLUSTER}_{\gamma}$ to be any polynomial time clustering algorithm,
such that given a sample $S \subseteq \reals^s$,
the algorithm outputs a mapping
$\phi_S: \reals^s \rightarrow \reals^{\ell}$,
satisfying that for every
$x_i,x_j \in S$, if $\norm{x_i-x_j} < \gamma$ then 
$\phi_S(x_i) = \phi_S(x_j)$,
and if $\norm{x_i-x_j} > 2\gamma$ then
$\phi_S(x_i) \perp \phi_S(x_j)$.
Notice that this clustering could be a trivial clustering algorithm:
for each example, we cluster together all the examples that are within
$\gamma$ distance from it, mapping different clusters to orthogonal 
vectors in $\reals^{\ell}$. Thus, we take $\ell$ to be the number
of clusters found in $S$.

For the consistency with common CNN architecture, we can use a centroid-based clustering algorithm that outputs the centroid of each cluster,
using these cetnroids
as kernels for a convolution operation. Combining this with ReLU
with a fixed bias and a pooling operation gives an operation that maps each patch to a single vector, where vectors of different patches are approximately orthogonal.

\subsubsection{Two-Layer Network Algorithm}
The second building-block of our main algorithm is a gradient-based optimization
algorithm that is used to train a two-layer convolutional subnet.
In this paper, we define a convolutional subnet to be a function $\mathcal{N}_{K,W}:
\reals^{\ell \times m} \rightarrow \reals$ defined by:
\[
\mathcal{N}_{K,W}(X) =
\inner{W^\top,K^\top X}
\]
Where we define the inner product between matrices $A,B$ as
$\inner{A,B} := \mathrm{tr}(A B)$.

This is equivalent to a convolution operation on an image,
followed by a linear weighted sum: assume $X$ is the matrix
where each column is a patch in an image (the ``im2col'' operation),
then multiplying this matrix by $K^\top$ is equivalent to performing a
convolution operation with kernels $\kvec{1}, \dots, \kvec{n}$ on the
original image (where we denote $\kvec{i}$ to be the $i$-th vector of
matrix $K$). Flattening the resulting matrix and multiplying by the
weights in $W$ yields the second linear layer.

The top linear layer of the network
outputs a prediction for the label $y \in \mathcal{Y}$,
and is trained with respect to the loss
$\mathcal{L}_{K,W}^S$ on a given set of examples $S$, defined as:
\[
\mathcal{L}^S_{K,W} =
\mean{(X,y) \sim S}{\ell_y(\mathcal{N}_{K,W}(X))}
\]
For some loss function
$\ell : \reals \times \mathcal{Y} \rightarrow \reals$.

After removing the top linear layer (which is used only to train the convolutional layer), this algorithm will output the matrix $K$.
This matrix is a set of 1x1 convolution kernels learned during the optimization, that are used on top of the previous operations. We denote
$\mathrm{TLGD}(S,T,\eta,n,\sigma)$ the algorithm that trains a
two-layer network of width $n$,
on sample $S$ for $T$ iterations with learning rate $\eta$,
randomly initializing $K,W$ from some distribution
with parameter $\sigma$ (described in details
in the theoretical section).
This algorithm outputs the Conv1x1 kernels learned.

As we show in our theoretical
analysis, running a gradient-based algorithm will implicitly learn an embedding
that maps patches from the same class to similar vectors, and patches from
different classes to vectors that are far away.

\subsubsection{Classification Algorithm}
Finally, the last building block of the algorithm is a classification
stage, that is used on top of the deep convolution architecture learned
in the previous steps. We consider some hypothesis space $\mathcal{H} \subset \mathcal{Y}^{\mathcal{X}}$ (for example linear separators).
Denote $\textrm{CLS}$ a polynomial time classification algorithm,
such that given a sample $S \subseteq \mathcal{X} \times \mathcal{Y}$, the algorithm outputs some hypothesis $\textrm{CLS}(S) \in \mathcal{H}$.
Again, we can assume this algorithm is trained using a gradient-based
optimization algorithm, to infer the label $y \in \mathcal{Y}$ based on 
the high-level semantics generated by the deep convolutional network
trained in the previous steps.

\subsubsection{Complete Algorithm}
Utilizing the building blocks described previously, 
our algorithm learns a deep CNN layer after layer. This network is used
to infer the label for each image. This algorithm is described formally
in \algref{alg:dc_full}. In the description, we use the notation
$\phi \ast A$ to denote the operation of applying a map $\phi: \mathcal{K}_0^{m_0} \rightarrow \mathcal{K}_1^{m_1}$ on a tensor $A$, replacing patches
of size $m_0$ by vectors in $\mathcal{K}_1^{m_1}$. Formally:
\[
\phi \ast A :=
[\phi(A_{:,i\cdot m_0 \dots (i+1) \cdot m_0})]_{i}
\]

\begin{algorithm}
   \caption{Deep Layerwise Clustering}\label{alg:dc_full}
\begin{algorithmic}
\footnotesize
  \STATE \textbf{input}: 
\begin{ALC@g}
  \STATE numbers $\gamma,\eta,T,n,\sigma$
  \STATE sample $S = \{(x_1,y_1), \dots (x_N, y_N)\} \subseteq \reals^{ms^k}\times \mathcal{Y}$
\end{ALC@g}
   \STATE $h_k \leftarrow id$
   \FOR {$\kappa = k \dots 1$}
   \STATE // construct sample of examples after running through the current network
   \STATE set $S_{\kappa} \leftarrow \{(h_{\kappa}(x_1),y_1), \dots,
   (h_{\kappa}(x_N),y_N)\}$
   \STATE // generate patches from the current sample, and cluster together
   \STATE set $P_{\kappa} \leftarrow \{\mathrm{patches ~of~ size~ \sqrt{s}\times \sqrt{s}~ from~ } S_{\kappa}\}$
   \STATE set $\phi_{\kappa} \leftarrow \textrm{CLUSTER}_{\gamma}
   (P_{\kappa})$
   \STATE // map the patches into orthogonal vectors using the embedding $\phi_{\kappa}$
   \STATE set $\hat{S}_{\kappa} \leftarrow \{(\phi_{\kappa} \ast h_{\kappa}(x_1),y_1), \dots, (\phi_{\kappa} \ast h_{\kappa}(x_N),y_N)\}$
   \STATE // train a two-layer network to find a ``good'' embedding for the
   patches
   \STATE set $K_{\kappa-1} \leftarrow \textrm{TLGD}
   (\hat{S}_{\kappa}, T, \eta, n, \sigma)$
   \STATE // add the current block to the network
   \STATE set $h_{\kappa-1} \leftarrow K_{\kappa-1}^\top (\phi_{\kappa} \ast h_{\kappa})$
   \ENDFOR
   \STATE // run a final clustering on the vectors of the current sample
   \STATE set $S_0 \leftarrow \{(h_0(x_1),y_1), \dots,
   (h_0(x_N),y_N)\}$
   \STATE set $P_0 \leftarrow \{\mathrm{patches ~of~ size~ 1\times1~ from~ } S_0\}$
   \STATE set $\phi_0 \leftarrow \textrm{CLUSTER}_{\gamma} (P_k)$
   \STATE // train the final classifier to predict the labels from the CNN
   output
   \STATE set $\hat{S}_0 \leftarrow \{(\phi_0 \ast h_0(x_1),y_1), \dots,
   (\phi_0 \ast h_0(x_N),y_N)\}$
   \STATE set $h \leftarrow \textrm{CLS}(\hat{S}_0)$
   \STATE return $h \circ h_{0}$
\end{algorithmic}
\end{algorithm}


\section{Theoretical Analysis}
In this section we prove that, under some assumptions, the algorithm
described in \algref{alg:dc_full} learns (with high probability) a
network model that correctly classifies the examples according to
their labels.  The structure of this section is as follows. We first
introduce our assumptions on the data distribution as well as on the
specific implementation of the algorithm. Next, we turn to 
the analysis of the algorithm itself, starting with showing that the sub module of
training a two-layer network implicitly learns an embedding of the
patches into a space where patches from a similar semantic class are
close to each other, while patches from different classes are far
apart. Using this property, we show that even a trivial clustering
algorithm manages to correctly cluster the patches. Finally, we prove
that performing these two steps (two-layer network + trivial clustering)
iteratively, layer by layer, leads to revealing the underlying model. 

\subsection{Assumptions}
\label{sec:assumptions}

Our analysis relies on several assumptions on the data distribution,
as well as on the suggested implementation of the algorithm.  These
assumptions are necessary for the theorems to hold, and admittedly are
far from being trivial. We believe that some of the assumptions can be
relaxed on the expense of a much more complicated proof.

\subsubsection{Distributional Assumptions} \label{sec:distributionAssumptions}
For simplicity, we focus on binary classification problems, namely,
$\mathcal{Y} = \{\pm 1\}$. The extension to multi-class problems is
straightforward. We assume that the sets of semantic classes
$\mathcal{C}_0, \dots ,\mathcal{C}_{k-1}$ are finite, and the final
(observed) image is over the reals, i.e $\mathcal{C}_k = \reals$.

We assume the following is true for all $\kappa \in [k]$.
For all $c \in \mathcal{C}_{\kappa}$
the distribution of the patches in the lower-level image for 
pixels of value $c$, denoted $\mathcal{D}_c$, is a uniform distribution over
a finite set of patches $S_{c,\kappa} \subseteq \mathcal{C}_{\kappa +
  1}^s$. We further assume that all these sets are disjoint and are of
fixed size $d$. 

For every $c$ we denote by  $\mathcal{F}_c$ the operator that takes a
tensor (of some dimension) as its input and replaces every element of
the input by the boolean that indicates whether it equals to $c$. 

We introduce the notation:
$
v^{\kappa}_c
:= \mean{(z,y) \sim \mathcal{G}_{\kappa}}{-y \mathcal{F}_c(z)} \in \reals^{ms^{\kappa}}
$.

Notice that $-v^{\kappa}_c$ is the ``mean'' image over the
distribution for every given semantic class, $c$. For example,
semantic classes that tend to appear in the upper-left corner of the
image for positive images will have positive values in the upper-left
entries of $-v_c^{\kappa}$.  As will be explained later, these images
play a key role in our analysis.

For our analysis to follow, we assume that the vectors
$\{v_c^\kappa\}_{c \in \mathcal{C}_\kappa}$ are linearly independent. 
For each $c_1, c_2 \in \mathcal{C}_\kappa$ we denote the angle between
$v^\kappa_{c_1}$ and $v^\kappa_{c_2}$ by:

\[
\angle(v^\kappa_{c_1}, v^\kappa_{c_2}) :=
\arccos \left( \frac{ v^\kappa_{c_1} \cdot v^\kappa_{c_2} }
{\norm{v^\kappa_{c_1}} \norm{v^\kappa_{c_2}}}\right) \in [0,\pi]
\]
Denote $\theta := \min_{\kappa < k, c_1,c_2 \in \mathcal{C}_\kappa}
\angle(v^\kappa_{c_1}, v^\kappa_{c_2})$ and
$\lambda := \min_{\kappa<k, c \in \mathcal{C}_\kappa} \norm{v^\kappa_c}$.
From the linear independence assumption it follows that both $\theta$
and $\lambda$ are strictly positive. The convergence of the algorithm
depends on $1/\theta$ and $1/\lambda$.

\subsubsection{High Level Efficient Learnability}

The essence of the problem to learn the mapping from the images in
$R^{ms^k}$ to the labels is that we do not observe the high level
semantic image in $\mathcal{C}_0^m$. To make this distinction clear,
we assume that, had we were given the semantic images in
$\mathcal{C}_0^m$, then the learning problem would have been
easy. Formally, there exists a classification algorithm,
denoted $\textrm{CLS}$, that upon receiving an
i.i.d. training set of polynomial size from the distribution
$\mathcal{G}_0$ over $\mathcal{C}_0^m \times \{\pm 1\}$, it returns
(with high probability, and after running in polynomial time) a
classifier whose error is at most $\epsilon$.





\subsubsection{Assumptions on the Implementation of the Two
  Layers Building Block}
  \label{sec:assumptionGD}
For the analysis, we train the two-layer network with respect to the loss
$\ell_y(\hat{y}) = -y\hat{y}$. This loss simplifies
the analysis, and seems to capture a similar behavior to other loss types used in practice.

Although in practice we perform a variant of SGD on a sample of the data
to train the network, we perform the analysis with respect to the population
loss:
$\mathcal{L}_{K,W} =
\mean{(X,y) \sim \mathcal{G}}{\ell_y(\mathcal{N}_{K,W}(X))}$. 
We denote $K_t \in \reals^{\ell \times n}$ the weights of the first
layer of the network in iteration $t$,
and denote $W_0 \in \reals^{n \times m}$ the initial weights of the second
layer. For simplicity of the analysis, we assume that only the first layer of the network is trained, while the weights of the second layer are fixed.
Thus, we perform
the following update step at each iteration of the gradient descent:
$K_t = K_{t-1} - \eta \frac{\partial}{\partial K} \mathcal{L}_{K_{t-1},
  W_0}$. 
Applying this multiple times trains a network, denoted
$\mathcal{N}_{K_t, W_0}$.

As for the initialization of $K_0, W_0$, assume we initialize
each column of $K_0$ from a uniform distribution on a sphere
whose radius is at most $\frac{\sigma}{2\sqrt{n}}$,
where $\sigma$ is a parameter of the algorithm and $n$ is the number
of columns of $K_0$. 
We initialize $W_0 \sim \mathcal{N}(0,1)$.

\subsection{Two-Layer Algorithm}
\label{sec:two_layer}
In this part of the analysis we limit ourselves to observing the
properties of the two-layer network trained in the $\kappa$ iteration of
the main algorithm. Hence, we introduce a few simple notations to make
the analysis clearer. We first assume that we are given some mapping
from patches to $\reals^\ell$, denoted
$\phi: \mathcal{C}_{\kappa}^s \rightarrow \reals^\ell$, such that
$\mathrm{Im}(\phi)$ is a set of orthonormal vectors (this mapping is
learned by the previous steps of the algorithm, as we show in
\secref{sec:full_net_train}).  Assume we observe the distribution
$\mathcal{G} := \phi \ast \mathcal{G}_{\kappa}$. Recall that
$\mathcal{G}_{\kappa}$ is generated from the latent distribution
$\mathcal{G}_{\kappa-1}$ over higher level images.  We overload the
notation and use $m$ to denote the size of the semantic images from
the higher level of the model (namely, $m := ms^{\kappa-1}$). Thus,
$\mathcal{G}$ is a distribution over
$\reals^{\ell \times m} \times \{\pm 1\}$.

Now, we can ``forget'' the intermediate latent distributions
$\mathcal{G}_0, \dots, \mathcal{G}_{\kappa-2}$, and assume the
distribution $\mathcal{G}_{\kappa-1}$ is given by sampling
$y \sim U(\{\pm 1\})$ and then by sampling $(z,y) \sim \mathcal{D}_y$,
where $z \in \mathcal{C}_{\kappa-1}^m$ and we use $\mathcal{D}_y$ to
denote the distribution $\mathcal{G}_{\kappa-1}$ conditioned on
$y$. Finally, we denote $\mathcal{G}_z$ the distribution $\mathcal{G}$
conditioned on $z$. Thus, we can describe the sampling from
$\mathcal{G}$ schematically by:
\[
\rightsquigarrow^{U(\mathcal{Y})} y
\rightsquigarrow^{\mathcal{D}_y} (z,y) 
\rightsquigarrow^{\mathcal{G}_z} (X,y) 
\in \reals^{\ell \times m} \times \{\pm 1\}
\]

We denote $\mathcal{C} := \mathcal{C}_{\kappa-1}$, which is the set of
the semantic classes of the images in the latent distribution
$\mathcal{G}_{\kappa-1}$.  For every $c \in \mathcal{C}$ denote
$S_c := \{ \phi(p)\}_{p \in S_{c,\kappa-1}}$, which is the application
of $\phi$ on the set of patches in the observed distribution generated
from the semantic class $c$.  Notice that from the assumption on
$\phi$ it follows that $\cup_{c \in \mathcal{C}}S_c$ is a set of
orthonormal vectors.  Denote $v_c = v^{\kappa-1}_c$, the ``mean''
image of the semantic class $c$.  The following diagram describes the
process of generating $X$ from $z$, with $\tilde{x}$ being the example
generated by distribution $\mathcal{G}_{\kappa}$ (the example which is
embedded into the observed space $\reals^{\ell \times m}$):

\begin{align*}
\tiny
\underbrace{
\left[
\begin{matrix}
z_1 \\
\vdots \\
z_m
\end{matrix}
\right]}_{z \in \mathcal{C}^m}
\rightsquigarrow
\underbrace{
\left[
\begin{matrix}
\tilde{x}_1 \\
\vdots \\
\tilde{x}_s \\
\vdots \\
\tilde{x}_{ms-s+1} \\
\vdots \\
\tilde{x}_{ms}
\end{matrix}
\right]}_{\tilde{x} \in \mathcal{C}_\kappa^{ms}}
\overset{\mathrm{im2col}}{\mapsto}
\left[
\begin{matrix}
\tilde{x}_1 & \dots & \tilde{x}_{ms-s+1} \\
\vdots & & \vdots \\
\tilde{x}_s & \dots & \tilde{x}_{ms} \\
\end{matrix}
\right]
\overset{\phi}{\mapsto}
\underbrace{
\left[
\begin{matrix}
\overbrace{
\phi\left(
\begin{matrix}
\tilde{x}_1 \\
\vdots \\
\tilde{x}_s
\end{matrix}\right)}^{\xvec{1} \in S_{z_1}} & \dots &
\overbrace{
\phi \left(
\begin{matrix}
\tilde{x}_{ms-s+1} \\
\vdots \\
\tilde{x}_{ms}
\end{matrix}\right)}^{\xvec{m} \in S_{z_m}} 
\end{matrix}
\right]}_{X \in \reals^{\ell \times m}}
\end{align*}

Now, we can introduce the main theorem of this section.
This theorem states that training the two-layer Conv net
as defined previously will implicitly learn an embedding
of the observed patches into a space such that patches from the
same semantic class are close to each other, while patches
from different classes are far. Recall that we do not have access
to the latent distribution, and thus cannot possibly
learn such embedding directly. Therefore, this surprising property
of gradient descent is the key feature that allows
our main algorithm to learn the high-level semantics of the images.

\begin{theorem}
\label{thm:convergence}
Let $\theta,\lambda$ be as described in
\secref{sec:distributionAssumptions}. 
Assume we train a two-layer network of size $n > \frac{2\pi}{0.23\theta} \log (\frac{|\mathcal{C}|}{\delta})$ with respect to the population loss
on distribution $\mathcal{G}$, with learning rate $\eta$, for $T >
\frac{2 \sqrt{s} \sigma d}{\eta \lambda}$ iterations. Assume that the
training is as described in \secref{sec:assumptionGD}, where the
parameter $\sigma$ of the initialization is also described there. 
Then with probability of at least $1-\delta$:
\begin{enumerate}
\item for each $c \in \mathcal{C}$, for every $x_1, x_2 \in
  S_c$ we get
$\norm{K_T^\top \cdot x_1 - K_T^\top \cdot x_2} < \sigma$

\item for $c_1, c_2 \in \mathcal{C}$, if $c_1 \ne c_2$, for every
$x_1 \in S_{c_1}, x_2 \in S_{c_2}$, we get
$\norm{K_T^\top \cdot x_1 - K_T^\top \cdot x_2} > 2 \sqrt{s}\sigma$
\end{enumerate}
\end{theorem}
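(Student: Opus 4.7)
The key insight driving the proof is that, because $\ell_y(\hat y)=-y\hat y$ is linear in $\hat y$ and $\mathcal{N}_{K,W_0}(X)=\inner{W_0^\top,K^\top X}$ is linear in $K$, the population loss $\mathcal{L}_{K,W_0}$ is a linear function of $K$. Its gradient $\nabla_K\mathcal{L}_{K,W_0}=-\mathbb{E}_{\mathcal{G}}[yX]\,W_0^\top$ is therefore a constant matrix that does not depend on $t$, so $T$ gradient steps collapse to the closed form
\[
K_T = K_0 + T\eta\,\mathbb{E}_{\mathcal{G}}[yX]\,W_0^\top.
\]
The rest of the argument identifies the direction of this update and then shows that the $T\eta$ factor amplifies it enough to pull cross-class patches apart while leaving same-class patches within $\sigma$ of each other.

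To identify the signal direction, I would condition on the latent semantic code $z$: given $z_i=c$, the $i$-th patch is drawn from $\mathcal{D}_c$ (embedded through $\phi$) independently of $y$, so using the definition of $v_c$ one obtains $\mathbb{E}_{\mathcal{G}}[yX]=-\sum_{c\in\mathcal{C}}\bar x_c\,v_c^\top$, where $\bar x_c:=\tfrac{1}{d}\sum_{x\in S_c}x$. Because $\bigcup_c S_c$ is orthonormal, for any $x\in S_c$ the inner products $\bar x_{c'}^\top x$ vanish for $c'\neq c$ and equal $1/d$ for $c'=c$, so the outer sum collapses to a single term:
\[
K_T^\top x = K_0^\top x - \tfrac{T\eta}{d}\,W_0 v_c.
\]
In other words, $K_T^\top$ sends every class-$c$ patch to the \emph{same} point $-\tfrac{T\eta}{d}W_0 v_c$, up to a small perturbation coming from the random initialization $K_0^\top$.

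The same-class claim is now immediate: for $x_1,x_2\in S_c$ we have $K_T^\top x_1-K_T^\top x_2=K_0^\top(x_1-x_2)$, and since every column of $K_0$ has norm at most $\sigma/(2\sqrt n)$, $\|K_0\|_F\leq\sigma/2$, and hence $\|K_0^\top(x_1-x_2)\|\leq(\sigma/2)\sqrt 2<\sigma$. For the cross-class claim the reverse triangle inequality gives
\[
\|K_T^\top x_1-K_T^\top x_2\|\geq \tfrac{T\eta}{d}\,\|W_0(v_{c_1}-v_{c_2})\|-\sigma/\sqrt 2,
\]
and the hypothesis $T>\tfrac{2\sqrt s\sigma d}{\eta\lambda}$ turns this into the requirement that $\|W_0(v_{c_1}-v_{c_2})\|\gtrsim\lambda(1+O(1/\sqrt s))$ uniformly in $c_1\neq c_2$, with probability at least $1-\delta$. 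This uniform Gaussian lower bound is the main obstacle. The natural route is to note that $\|W_0 u\|^2\sim\|u\|^2\chi^2_n$, apply a Chernoff bound for $\chi^2_n$, combine with the law-of-cosines estimate $\|v_{c_1}-v_{c_2}\|\geq 2\lambda\sin(\theta/2)$, and take a union bound over the $\binom{|\mathcal{C}|}{2}$ pairs. This yields the bound with $n$ polynomial in $1/\theta$ and logarithmic in $|\mathcal{C}|/\delta$; sharpening to the quoted dependence $n=\Omega(\log(|\mathcal{C}|/\delta)/\theta)$ likely requires a random-hyperplane-style anti-concentration argument that exploits the fact that the probability a random direction separates $v_{c_1}$ from $v_{c_2}$ is exactly $\theta/\pi$.
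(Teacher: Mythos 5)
Your structural analysis is correct and is in fact a cleaner packaging of what the paper does. The paper never writes the closed form $K_T=K_0+T\eta\,\mathbb{E}[yX]W_0^\top$ explicitly; instead it proves the patch-wise identity $x\p\cdot\frac{\partial}{\partial \kvec{i}}\mathcal{L}_{K,W_0}=\frac{1}{d}\wvec{i}_0\cdot v_c$ (its Lemmas~\ref{lem_vec} and~\ref{lem_grad}) and telescopes over iterations, which is the same observation that the loss is linear in $K$ so the gradient is constant. Your identity $K_T^\top x=K_0^\top x-\frac{T\eta}{d}W_0v_c$ for $x\in S_c$ and your treatment of claim~1 via $\norm{K_0^\top(x_1-x_2)}\le\frac{\sigma}{2}\sqrt{2}<\sigma$ coincide with the paper's Corollary~\ref{cor:symmetry} and part~1 of its proof.

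Where you genuinely diverge is the probabilistic step, and this is also where your argument falls short of the theorem \emph{as stated}. You lower-bound the full vector $\norm{W_0(v_{c_1}-v_{c_2})}$ via $\chi^2_n$ concentration together with $\norm{v_{c_1}-v_{c_2}}\ge 2\lambda\sin(\theta/2)$. To force $2\lambda\sin(\theta/2)\sqrt{n/2}\gtrsim\lambda$ you need $n=\Omega(1/\theta^2)$, which is \emph{not} implied by the hypothesis $n>\frac{2\pi}{0.23\theta}\log(|\mathcal{C}|/\delta)$ (take $\theta$ small and $|\mathcal{C}|/\delta$ moderate). So as written you prove a variant of the theorem under a stronger assumption on $n$, not the theorem itself. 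The paper avoids this by working coordinate-by-coordinate rather than with the norm of $W_0(v_{c_1}-v_{c_2})$: for each row $\wvec{i}_0$ it shows that with probability at least $0.23\,\angle(v_{c_1},v_{c_2})/\pi\ge 0.23\,\theta/\pi$ the two events $\sign(\wvec{i}_0\cdot v_{c_1})\ne\sign(\wvec{i}_0\cdot v_{c_2})$ (the random-hyperplane argument you anticipated, Lemma~\ref{lem:diff_class}) and $|\wvec{i}_0\cdot v_{c_j}|\ge\norm{v_{c_j}}/2$ hold simultaneously; on that event the single coordinate $i$ already gives $|\kvec{i}_T\cdot x_1-\kvec{i}_T\cdot x_2|>\frac{\eta T}{d}\cdot\frac{\norm{v_{c_1}}+\norm{v_{c_2}}}{2}-2\sigma>2\sqrt{s}\sigma$, which lower-bounds the whole norm. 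A $(1-0.23\theta/\pi)^n$ failure probability per pair and a union bound over $|\mathcal{C}|^2$ pairs then yield exactly the stated $n=\Omega(\theta^{-1}\log(|\mathcal{C}|/\delta))$. The one idea missing from your proposal is that opposite signs let a \emph{single} coordinate carry the separation $\frac{\eta T}{d}(\norm{v_{c_1}}+\norm{v_{c_2}})/2\ge\frac{\eta T}{d}\lambda$, so no lower bound on $\norm{v_{c_1}-v_{c_2}}$ (and hence no $\theta^{-2}$ blow-up) is ever needed; $\theta$ enters only through the success probability per coordinate.
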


For lack of space, we give the full proof of this theorem in the appendix,
and give a rough sketch of the proof here: Observe the value of $\kvec{i}_t \cdot x\p$,
which is the activation of a kernel
in the first layer, denoted $\kvec{i}_t$, operated on a given patch
$x\p \in S_c$. Due to the gradient descent update rule, this value changes
by $- \eta x\p \cdot \frac{\partial}{\partial \kvec{i}} \mathcal{L}_{K_t, W_0}$
at each iteration. Analyzing this gradient shows that this expression,
i.e the change in the activation $\kvec{i}_t \cdot x\p$,
is in fact proportional to $\wvec{i}_0 \cdot v_c$.
In other words, the value of $\wvec{i}_0 \cdot v_c$ is the only factor that
dominates the behavior of the gradient with respect to $\kvec{i}$.
Hence, the activation of two patches generated from the same class
will behave similarly throughout the training process.
Furthermore, for patches from different classes
$c_1 \ne c_2$, if we happen to get: $\sign \left(\wvec{i}_0 \cdot
  v_{c_1} \right) \ne \sign \left(\wvec{i}_0 \cdot v_{c_1} \right)$,
due to the random initialization (which will happen in sufficient
probability), then the activations of patches
from class $c_1$ and patches from class $c_2$ will go in opposite directions,
and after enough iterations will be far apart.

To give more intuition as to why the proof works, we can look at the
whole process from a different angle. For two patches in an image sampled
from a given distribution, we can look at two measures of similarity:
First, we can observe a simple ``geometric'' similarity, like the $\ell_2$
distance between these two patches. Second, we can define a ``semantic''
similarity between patches to be the similarity between the distribution
of occurrences of each patch across the image (i.e, patches that
tend to appear more in the upper part of the image for positive labels
are in this sense ``semantically'' similar).
In our case, we show that the vector $v_c$ gives us exactly this measure
of similarity: two patches from the same class are semantically similar
in the sense that their mean distribution in the image is exactly 
the same image, denoted $v_c$. Given this notion, we can see 
why our full algorithm works: the clustering part of the
algorithm merges together geometrically similar patches, 
while the gradient descent algorithm maps semantically similar patches
to geometrically similar vectors, allowing the clustering of the next
iteration to perform clustering based again on the simple geometrical distance.
Note that while the technical proof heavily relies on our assumptions, the
intuitions above may hold true for real data.

\subsection{Full Network Training}
\label{sec:full_net_train}
In this section, we analyze the convergence of the full algorithm
described in \algref{alg:dc_full}, where our main claim is that
this algorithm successfully learns a model that classifies the examples
sampled from the observed distribution $\mathcal{G}_k$.
Formally, our main claim is given in the following theorem:
\begin{theorem}
\label{thm:full_alg}
Suppose that the assumptions given in \secref{sec:assumptions} hold. 
Fix $\epsilon, \delta \in (0,\frac{1}{4})$,
and let $\delta\p = \frac{\delta}{k+1}$.
Denote $C := \max_{\kappa < k} |\mathcal{C}_\kappa|$
the maximal number of semantic classes in each level $\kappa$. 
Let $\Delta$ denote the minimal distance between any two possible different patches in the observed images in $\reals^{m s^k}$. 
Choose $\gamma \le \frac{1}{2} \Delta$, $\sigma = \frac{\gamma}{\sqrt{s}}$,
$n > \frac{2\pi}{0.23\theta} \log (\frac{C}{\delta})$,
$T > \frac{2 \sqrt{s} \sigma d}{\eta \lambda}$.
Then, with probability $1-\delta$,
running \algref{alg:dc_full} with
parameters $\epsilon,\gamma,\eta,T,n,\sigma$
on data from distribution $\mathcal{G}_k$
returns hypothesis $h$ such that
$P_{(x,y) \sim \mathcal{G}_k} (h(x) \ne y) < \epsilon$.
\end{theorem}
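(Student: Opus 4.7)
The plan is to prove the theorem by downward induction on the layer index $\kappa$, running from $\kappa = k$ down to $\kappa = 0$, with Theorem \ref{thm:convergence} as the main engine and the high-level efficient learnability assumption supplying the final classification step. The invariant I would maintain after the $\kappa$-th iteration of the main loop is that $h_{\kappa-1}(x)$ exposes the latent level-$(\kappa-1)$ image in a precise sense: for any two $\sqrt{s}\times\sqrt{s}$ patches $p, p'$ extracted from $h_{\kappa-1}$ applied to (possibly different) samples, $\|p - p'\| < \gamma$ when the two patches correspond to the same element of $\bigcup_{c \in \mathcal{C}_{\kappa-2}} S_{c,\kappa-1}$ in the generative model, and $\|p - p'\| > 2\gamma$ otherwise. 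This is exactly the precondition $\mathrm{CLUSTER}_\gamma$ needs at the next iteration, and it also guarantees that $\phi_{\kappa-1}$ lands in an (up to scaling) set of orthonormal vectors as required by Theorem \ref{thm:convergence}.

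The base case $\kappa = k$ follows from the $\Delta$-assumption. Since $h_k = \mathrm{id}$, the $\sqrt{s}\times\sqrt{s}$ patches of $h_k(x)$ are literally elements of $\bigcup_{c \in \mathcal{C}_{k-1}} S_{c,k} \subset \reals^s$; identical patches are at distance $0 < \gamma$, while distinct patches are at distance at least $\Delta \geq 2\gamma$ by the choice $\gamma \leq \Delta/2$. Hence $\phi_k$ sends every distinct patch to its own orthogonal vector, feeding Theorem \ref{thm:convergence} a legitimate orthonormal embedding.

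For the inductive step, I would invoke Theorem \ref{thm:convergence} on the distribution $\phi_\kappa * \mathcal{G}_\kappa$, whose hypotheses hold by the invariant at level $\kappa$ together with the stated choices of $n$ and $T$. This yields $K_{\kappa-1}$ such that pixel-level embeddings satisfy $\|K_{\kappa-1}^\top \phi_\kappa(x_1) - K_{\kappa-1}^\top \phi_\kappa(x_2)\| < \sigma$ for $x_1,x_2$ in the same $S_{c,\kappa}$, and $> 2\sqrt{s}\sigma$ for $x_1 \in S_{c_1,\kappa},\, x_2 \in S_{c_2,\kappa}$ with $c_1 \ne c_2$. To lift this from pixels to $\sqrt{s}\times\sqrt{s}$ patches of $h_{\kappa-1}(x)$, note that such a patch is a block-concatenation of $s$ pixel embeddings corresponding to one patch in $\mathcal{C}_{\kappa-1}^s$, so the squared $\ell_2$ distance between two patches decomposes additively over these $s$ blocks. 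With $\sigma = \gamma/\sqrt{s}$, patches that agree coordinate-wise give total distance strictly less than $\sqrt{s\sigma^2} = \gamma$, while patches that disagree at any coordinate have at least one block-distance exceeding $2\sqrt{s}\sigma = 2\gamma$ and hence total distance exceeding $2\gamma$. This re-establishes the invariant at level $\kappa-1$.

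After the loop exits, $h_0(x)$ embeds the latent level-$0$ image $z \in \mathcal{C}_0^m$ pixel-wise, with within-class distance less than $\sigma < \gamma$ and between-class distance exceeding $2\gamma$, so $\phi_0$ recovers $z$ up to a fixed bijection from $\mathcal{C}_0$ to orthonormal vectors. Feeding $\hat{S}_0$ into $\mathrm{CLS}$ is thus statistically equivalent (via this bijection) to running $\mathrm{CLS}$ on an i.i.d.\ sample from $\mathcal{G}_0$, which by the high-level efficient learnability assumption returns a classifier of error at most $\epsilon$ with probability $1-\delta'$. A union bound over the $k$ invocations of Theorem \ref{thm:convergence} and the single invocation of $\mathrm{CLS}$, each failing with probability at most $\delta' = \delta/(k+1)$, gives overall failure probability at most $\delta$. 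The main obstacle I anticipate is the bookkeeping required to identify, at each level, the output of $\mathrm{CLUSTER}_\gamma$ with the orthonormal embedding Theorem \ref{thm:convergence} formally assumes, and to verify that the per-pixel $\sigma$-error budget aggregates just tightly enough to fit inside the $\gamma$-window at the next clustering step; once these identifications are nailed down, the induction itself is fairly mechanical.
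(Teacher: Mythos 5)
Your proposal is correct and follows essentially the same route as the paper: the layer-wise invariant you maintain is exactly the content of \lemref{lem:orth_mapping}, established by the same downward induction with the same base case from $\gamma \le \frac{1}{2}\Delta$, the same $\sqrt{s}\,\sigma = \gamma$ pixel-to-patch aggregation of the guarantees of \thmref{thm:convergence}, and the same $\delta/(k+1)$ union bound over the $k$ two-layer training steps plus the final invocation of $\textrm{CLS}$.
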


To show this,
we rely on the result of Section \ref{sec:two_layer},
which guaranties that the embedding learned by the network
at each iteration maps patches from the same class to similar vectors.
Now, recall that our model assumes that a single pixel in a high-level image
is ``manifested'' as a patch in the lower-level image.
Thus, a patch of size $s$ in the higher-level image is manifested
in $s^2$ patch in the lower-level image, and many such manifestations are
possible. Thus, the fact that we find such ``good'' embedding allows
our simple clustering algorithm to cluster
together different low-level manifestations of a single high-level
patch. Hence, iteratively applying this embedding and clustering
steps allows to decode the topmost semantic image,
which can be then classified by our simple classification algorithm.

Before we show the proof, we remind a few notations that were used in
the algorithm's description. We use $\phi_\kappa$ to denote the clustering
of patches learned in the $\kappa$ iteration of the algorithm,
and $K_\kappa$ the weights of the kernels learned BEFORE the $\kappa$
step (thus, the patches mapped by $K_\kappa$ are the input to the clustering
algorithm that outputs $\phi_\kappa$). Note that in every step of the algorithm
we perform a clustering on patches of size $s$ in the current latent image, 
while at the last step we cluster only patches of size $1$ (i.e, cluster
the vectors in the ``channels'' dimension). This is because after
the final iteration we have a mapping of the distribution $\mathcal{G}_1$,
where patches of the same class are mapped to similar vectors. To generate
a mapping of $\mathcal{G}_0$, we thus only need to cluster these
vectors together, to get orthonormal representations of each class.
Finally, we use the notations $\phi \ast A$ to indicate that we operate
$\phi$ on every patch of the tensor $A$. When we use operations on
distributions, for example $h \circ \mathcal{G}$ or $\phi \ast \mathcal{G}$,
we refer to the new distribution generated by applying these operation to every
examples sampled from $\mathcal{G}$. The essence of the proof is the following lemma: 
\begin{lemma}
\label{lem:orth_mapping}
Let $\mathcal{G} := \mathcal{G}_k$ be the distribution over pairs
$(x,y)$, where $x$ is the observed image over the reals, and recall
that for $\kappa < k$, the distribution $\mathcal{G}_\kappa$ is over
pairs $(x,y)$ where $x$ is in a space of latent semantic images over
$\mathcal{C}_{\kappa}$. 
For every $\kappa \in [k]$,
with probability at least $1-\kappa \delta\p$,
there exists an orthonormal patch mapping
$\varphi_\kappa: \mathcal{C}_\kappa^s \rightarrow \reals^\ell$ such that
$\phi_\kappa \ast (h_\kappa \circ \mathcal{G}) = \varphi_\kappa \ast
\mathcal{G}_\kappa$, where $\phi_\kappa$ and $h_\kappa$ are as defined
in \algref{alg:dc_full}. 
\end{lemma}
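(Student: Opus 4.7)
The plan is to proceed by downward induction on $\kappa$, from $\kappa = k$ down to $\kappa = 1$, invoking Theorem~\ref{thm:convergence} once per inductive step and taking a single union bound at the end to obtain the probability statement. For the base case $\kappa = k$, I would use that $h_k$ is the identity and that $\mathcal{G}_k$ has finite patch support (each observed pixel is drawn from one of the finitely many sets $S_{c,k-1}$, each of fixed size $d$). By the distributional assumption, the minimum pairwise distance between two distinct possible patches is $\Delta$, and because $\gamma \le \Delta/2$, $\textrm{CLUSTER}_{\gamma}$ deterministically sends each distinct patch to its own orthonormal coordinate; restricting this assignment to the (finite) support of $\mathcal{G}_k$ defines $\varphi_k : \mathcal{C}_k^s \to \reals^\ell$ satisfying $\phi_k \ast \mathcal{G}_k = \varphi_k \ast \mathcal{G}_k$.

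For the inductive step, assume the claim at level $\kappa$ and recall $h_{\kappa-1} = K_{\kappa-1}^\top(\phi_\kappa \ast h_\kappa)$. Substituting the hypothesis into the pushforward gives $h_{\kappa-1} \circ \mathcal{G} = K_{\kappa-1}^\top \ast (\varphi_\kappa \ast \mathcal{G}_\kappa)$. Since $\varphi_\kappa$ sends the finitely many latent size-$s$ patches to an orthonormal family, the distribution $\varphi_\kappa \ast \mathcal{G}_\kappa$ is exactly of the form analyzed in Section~\ref{sec:two_layer}, with the theorem's sets $S_c$ realized as $\varphi_\kappa(S_{c,\kappa-1})$. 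The parameter choices in Theorem~\ref{thm:full_alg} meet the hypotheses of Theorem~\ref{thm:convergence} at confidence $\delta'$ (using $|\mathcal{C}_{\kappa-1}| \le C$), so with probability at least $1-\delta'$ the resulting $K_{\kappa-1}$ satisfies $\|K_{\kappa-1}^\top u_1 - K_{\kappa-1}^\top u_2\| < \sigma$ whenever $u_1,u_2$ come from the same class in $\mathcal{C}_{\kappa-1}$ and $\|K_{\kappa-1}^\top u_1 - K_{\kappa-1}^\top u_2\| > 2\sqrt{s}\sigma$ whenever they come from different classes.

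The remaining work is to translate this pixel-wise control into a statement about the size-$s$ patches on which $\phi_{\kappa-1}$ operates. Any size-$s$ patch in $h_{\kappa-1}(x)$ is a concatenation of $s$ column vectors $K_{\kappa-1}^\top \varphi_\kappa(p_i)$, where $(p_1,\ldots,p_s)$ are the $s$ level-$\kappa$ patches generated by the $s$ pixels of the corresponding size-$s$ block of $x^{(\kappa-1)}$. If two such patches come from the same element of $\mathcal{C}_{\kappa-1}^s$, every one of the $s$ coordinate-wise differences has norm below $\sigma$, so the aggregate $\ell_2$ distance is below $\sqrt{s}\sigma = \gamma$; if they come from different elements, at least one coordinate contributes more than $2\sqrt{s}\sigma = 2\gamma$. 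By the defining property of $\textrm{CLUSTER}_{\gamma}$, this means $\phi_{\kappa-1}$ identifies exactly those patches that agree on their element of $\mathcal{C}_{\kappa-1}^s$ and sends distinct elements to orthonormal vectors, so declaring $\varphi_{\kappa-1}$ to be this induced assignment closes the step.

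The hardest part will be the aggregation calibration in the previous paragraph: Theorem~\ref{thm:convergence} controls \emph{single} latent-pixel embeddings, whereas $\phi_{\kappa-1}$ acts on size-$s$ patches, and the parameter choice $\sigma = \gamma/\sqrt{s}$ in Theorem~\ref{thm:full_alg} is engineered precisely to align the aggregated $\ell_2$-gap $[\sqrt{s}\sigma,2\sqrt{s}\sigma]$ with the $[\gamma,2\gamma]$ gap demanded by $\textrm{CLUSTER}_{\gamma}$. I would also need to check carefully that ``two patches in $h_{\kappa-1}(x)$ correspond to the same element of $\mathcal{C}_{\kappa-1}^s$'' is a well-defined event under the generative pushforward, which follows from the orthonormality of $\varphi_\kappa$ together with the disjointness of the sets $S_{c,\kappa-1}$. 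Once this is in place, a final union bound over the at most $k$ applications of Theorem~\ref{thm:convergence} delivers the claimed probability bound.
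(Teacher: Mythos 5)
Your proposal follows essentially the same route as the paper's proof: downward induction on $\kappa$, with the base case handled by $\gamma \le \Delta/2$ and the definition of $\textrm{CLUSTER}_\gamma$, and the inductive step applying \thmref{thm:convergence} to the pushforward distribution $\varphi_\kappa \ast \mathcal{G}_\kappa$, then aggregating the per-pixel bounds over the $s$ coordinates of a patch (using $\sigma = \gamma/\sqrt{s}$) to show same-class patches land within $\gamma$ and different-class patches beyond $2\gamma$, so the clustering induces a well-defined orthonormal $\varphi_{\kappa-1}$. The well-definedness check via arbitrary manifestations and the final union bound over the $k$ applications of the theorem are exactly the paper's argument.
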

The proofs of the lemma and of \thmref{thm:full_alg} are given in
\appref{app:proof_of_orth_mapping}. The lemma tells us that the neural network at step $\kappa$ of the
algorithm reveals (in some sense) the latent semantic structure. 

\section{Experiments}
As mentioned before, our analysis relies on distributional assumptions 
formalized in the generative model we suggest. A disadvantage of such
analyses is that the assumptions rarely hold for real-world
data, as the distribution of natural images is far more complex.
The goal of this section is to show that when running our algorithm on CIFAR-10, the performance of our model is in the same ballpark as
a vanilla CNN, trained with a common SGD-based optimization algorithm.
Hence, even though the data distribution deviates from our assumptions,
our algorithm still achieves good performance.

We chose the CIFAR-10 problem, as a rich enough dataset of natural images. As our aim is to show that our algorithm
achieves comparable result to a vanilla SGD-based optimization, and not to achieve
state-of-the-art results on CIFAR-10, we do not use any of the common
``tricks'' that are widely used when training deep networks
(such as data augmentation, dropout, batch normalization,
scheduled learning rate, averaging of weights across iterations etc.).
We implemented our algorithm by repeating the following steps twice:
(1) Sample $N$ patches of size 3x3 uniformly from the dataset.  (2)
For some $\ell$, run the K-means algorithm to find $\ell$ cluster
centers $c_1 \dots c_\ell$.  (3) At this step, we need to associate
each cluster with a vector in $\reals^\ell$, such that the image of
this mapping is a set of orthonormal vectors, and then map every patch
in every image to the vector corresponding to the cluster it belongs
to. We do so by performing Conv3x3 layer with the $\ell$ kernels
$c_1 \dots c_\ell$, and then perform ReLU operation with a fixed bias
$b$. This roughly maps each patch to the vector $e_i$, where $i$ is the
cluster the patch belongs to.  (4) While our analysis corresponds to
performing the convolution from the previous step with a stride of
$3$, to make the architecture closer to the commonly used CNNs
(specifically the one suggested in the Tensorflow
implementation~\cite{tensorflow}), we used a stride of $1$ followed
by a 2x2 max-pooling.  (5) Randomly initialize a two layered linear
network, where the first layer is Conv1x1 with $\ell\p$ output
channels, and the second layer is a fully-connected Affine layer that
outputs 10 channels to predict the 10 classes of CIFAR-10.  (6) Train the
two-layers with Adam optimization (\cite{adam}) on the cross-entropy loss, and 
remove the top layer. The output of the first layer is the output of
these steps.

Repeating the above steps twice yields a network with two blocks of
Conv3x3-ReLU-Pool-Conv1x1. We feed the output of these steps to a
final classifier that is trained again with Adam on cross entropy loss
for 100k iterations, to output the final classification of this
model. We test two choices for this classifier: a linear classifier
and a three-layers fully-connected neural network. Note that in both
cases, the output of our algorithm is a vanilla CNN. The only
difference is that it was trained differently.  To calibrate the
various parameters that define the model, we first perform random
parameter search, where we use 10k examples from the train set as
validation set (and the rest 40k as train set). After we found the
optimal parameters for all the setups we compare, we then train the
model again with the calibrated parameters on all the train data, and
plot the accuracy on the test data every 10k iterations. The parameters
found in the parameter search are listed in \appref{app:params}.

We compared our algorithm to several alternatives. First, the standard
CNN configuration in the Tensorflow implementation with two variants:
CNN+(FC+ReLU)$^3$ is the Tensorflow architecture and CNN+Linear is the
Tensorflow architecture where the last three fully connected layers
were replaced by a single fully connected layer. The goal of this
comparison is to show that the performance of our algorithm is in the
same ballpark as that of vanilla CNNs. Second, we use the same two
architectures mentioned before, but while using random weights for the
CNN and training only the FC layers. Some previous analyses of the
success of CNN claimed that the power of the algorithm comes from the
random initialization, and only the training of the last layer
matters. As is clearly seen, random weights are far from the
performance of vanilla CNNs. Our last experiment aims at showing the
power of the two layer training in our algorithm (step 6). To do so,
we compare our algorithm to a variant of it, in which step 6 is
replaced by random projections (based on Johnson-Lindenstrauss
lemma). We denote this variant by Clustering+JL.  As can be seen, this
variant gives drastically inferior results, showing that the training
step of Conv1x1 is crucial, and finds a ``good'' embedding for the
process that follows, as is suggested by our theoretical analysis.  A
summary of all the results is given in \figref{fig:results}.

\begin{figure}
\footnotesize
\center
\begin{tabular}{|l|l|l|l|l|l|l|l|l|}
  \hline
  Classifier & Accuracy(FC) & Accuracy(Linear) \\
  \hline
  CNN & \textbf{0.759} & 0.735 \\
  CNN(Random) & 0.645 & 0.616 \\
  Clustering+JL & 0.586 & 0.588 \\
  \hline
  Ours & \textbf{0.734} & 0.689 \\
  \hline
\end{tabular}
\caption{Results of various configurations on the CIFAR-10 dataset}\label{fig:results}.
\end{figure}

\paragraph{Acknowledgements:} This research is supported by the European Research Council (TheoryDL project).

\newpage

\bibliography{conv_net}{}
\bibliographystyle{plain}

\newpage

\appendix

\section{Proof of \thmref{thm:convergence}}
\label{sec:proof_conv}
For some class $c \in \mathcal{C}$ and for some patch $x\p \in S_c$,
denote $f_{x\p}$ a function that takes a matrix $X$ and returns a
vector $f_{x\p}(X)$ such that the $i$'th element of $f_{x\p}(X)$ is
the $1$ if the $i$'th column of $X$, denoted $\xvec{i}$, equals to $x\p$ and $0$
otherwise. That is, 
\[
f_{x\p} (X) := 
\left[
\begin{matrix}
\mathds{1}_{\xvec{1} = x\p} \\
\vdots \\
\mathds{1}_{\xvec{m} = x\p}
\end{matrix}
\right]
\]
Notice that from the orthonormality of the observed columns of $X$ it follows that:
$f_{x\p} (X)=X^{\top} x\p$.

We begin with proving the following technical lemma.

\begin{lemma}
\label{lem_vec}
For each $c \in \mathcal{C}$ and for each $x\p \in S_c$ we have:
\[
\mean{(X,y) \sim \mathcal{G}}{-y f_{x\p}(X)}  = \frac{1}{d} v_c
\]
\end{lemma}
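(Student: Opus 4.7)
The plan is to prove the lemma by conditioning on the latent higher-level image $z$ and the label $y$, reducing the expectation over the observed $X$ to a simple computation using the assumed uniform distribution over disjoint patch sets $S_{c,\kappa-1}$ of fixed size $d$.

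First, I would unpack the sampling model: given $z \in \mathcal{C}^m$, each column $\xvec{i}$ of $X$ is obtained by independently sampling a raw patch $\tilde{p}_i \sim \mathcal{D}_{z_i}$ (uniform on $S_{z_i,\kappa-1}$) and then applying $\phi$. Writing $x\p = \phi(p)$ for some $p \in S_{c,\kappa-1}$, and using the fact that $\phi$ is injective on $\bigcup_{c'} S_{c',\kappa-1}$ (since its image is a set of orthonormal vectors), I would observe that $\xvec{i} = x\p$ holds iff $\tilde{p}_i = p$. By the disjointness of the sets $S_{c',\kappa-1}$, the event $p \in S_{z_i,\kappa-1}$ is equivalent to $z_i = c$. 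Combining with the uniformity of $\mathcal{D}_{z_i}$ on a set of size $d$, this yields
\[
\mean{X \mid z}{(f_{x\p}(X))_i} = \Pr[\xvec{i} = x\p \mid z] = \frac{1}{d}\, \mathds{1}_{z_i = c}.
\]

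The second step is to recognize the right-hand side as exactly $\frac{1}{d}$ times the $i$-th entry of $\mathcal{F}_c(z)$, so in vector form $\mean{X \mid z}{f_{x\p}(X)} = \frac{1}{d}\, \mathcal{F}_c(z)$. Then by the tower property,
\[
\mean{(X,y) \sim \mathcal{G}}{-y f_{x\p}(X)} = \mean{(z,y) \sim \mathcal{G}_{\kappa-1}}{-y \cdot \mean{X \mid z}{f_{x\p}(X)}} = \frac{1}{d}\mean{(z,y) \sim \mathcal{G}_{\kappa-1}}{-y \mathcal{F}_c(z)} = \frac{1}{d} v_c,
\]
where the final equality is the definition of $v_c = v_c^{\kappa-1}$ from \secref{sec:distributionAssumptions}.

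I do not expect a real obstacle here: the lemma is essentially a bookkeeping statement that translates the definition of $v_c$ (living on the latent space) to the observed ``indicator'' statistics $f_{x\p}(X)$ on the embedded distribution. The one subtlety worth being careful about is the injectivity of $\phi$ on the relevant raw patches, which must be invoked so that the event $\{\xvec{i} = x\p\}$ really picks out the single pre-image $p$ and not some spurious collision; this is precisely what the orthonormality of $\mathrm{Im}(\phi)$ guarantees. Everything else is a straight application of the tower property and the uniform-on-$d$-patches assumption.
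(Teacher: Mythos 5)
Your proposal is correct and follows essentially the same route as the paper: condition on the latent image $z$, use the uniform-on-$d$-disjoint-patches assumption to get $\Pr[\xvec{j} = x\p \mid z] = \frac{1}{d}\mathds{1}_{z_j = c}$ coordinatewise, and then average over $(z,y)$ to recover $\frac{1}{d}v_c$. Your explicit remark about the injectivity of $\phi$ (guaranteed by orthonormality of its image) is a detail the paper leaves implicit, but otherwise the two arguments coincide.
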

\begin{proof}
Observe that
\[
\mean{(X,y) \sim \mathcal{G}}{-y f_{x\p}(X)}  = 
\frac{1}{2}\sum_{y=\pm1} -y 
\mean{z \sim \mathcal{D}_y}
{\mean{X \sim \mathcal{G}_{z}}
{f_{x\p} (X)}}
\]
Therefore, for each $j \in [m]$ we have:
\begin{align*}
\frac{1}{2}\sum_{y=\pm1} -y 
\mean{z \sim \mathcal{D}_y}
{\mean{X \sim \mathcal{G}_{z}}
{f_{x\p} (X)_j}}
&= \frac{1}{2}\sum_{y=\pm1} -y \mean{z \sim \mathcal{D}_y}
{\mean{x \sim \mathcal{G}_{z}}
{\mathds{1}_{\xvec{j} = x\p}}} \\
&= \frac{1}{2}\sum_{y=\pm1} -y \mean{z \sim \mathcal{D}_y}
{P_{x \sim \mathcal{G}_{z}}
(\xvec{j} = x\p)} \\
&= \frac{1}{2}\sum_{y=\pm1} -y \mean{z \sim \mathcal{D}_y}
{\frac{1}{d}\mathds{1}_{z_j = c}} \\
&= \frac{1}{d} \cdot \frac{1}{2}\sum_{y=\pm1} -y \mean{z \sim \mathcal{D}_y}
{\mathcal{F}_c (z)_j} = \frac{1}{d} [v_c]_j
\end{align*}
\end{proof}

The next lemma reveals a surprising connection between the gradient
and the vectors $v_c$. 
\begin{lemma}
\label{lem_grad}
for every
$c \in \mathcal{C}$ and for every $x\p \in S_c$:
\[
x\p \frac{\partial}{\partial \kvec{i}}
\mathcal{L}_{K,W} = 
\frac{1}{d} \wvec{i} \cdot v_c
\] 

\end{lemma}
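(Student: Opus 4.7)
The plan is to unpack the definition of $\mathcal{N}_{K,W}$ so that the dependence on $\kvec{i}$ is made explicit, differentiate, and then apply \lemref{lem_vec} after observing that the orthonormality of the embedded patches collapses $(x\p)^\top X$ to an indicator row vector.

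First, I would rewrite $\mathcal{N}_{K,W}(X) = \mathrm{tr}(W^\top K^\top X)$ as a sum over the kernel index. A direct coordinate expansion, using that $\kvec{j}$ is the $j$-th column of $K$ and $\wvec{j}$ is the $j$-th row of $W$, gives the decomposition $\mathcal{N}_{K,W}(X) = \sum_{j=1}^n (\kvec{j})^\top X \wvec{j}$. Only the $j=i$ summand depends on $\kvec{i}$, so $\grad{\kvec{i}} \mathcal{N}_{K,W}(X) = X \wvec{i}$, and by linearity of the expectation that defines $\mathcal{L}_{K,W}$, this upgrades to $\grad{\kvec{i}} \mathcal{L}_{K,W} = \mean{(X,y) \sim \mathcal{G}}{-y\, X \wvec{i}}$.

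Next, I would contract with $x\p$ on the left, turning the target quantity into $x\p \cdot \grad{\kvec{i}} \mathcal{L}_{K,W} = \mean{(X,y)}{-y\, (x\p)^\top X \wvec{i}}$. The key observation is that the columns $\xvec{1}, \dots, \xvec{m}$ of $X$ all lie in $\cup_{c \in \mathcal{C}} S_c$, which is an orthonormal set by the standing assumption on $\phi$. Hence $(x\p)^\top \xvec{j}$ equals $1$ if $\xvec{j} = x\p$ and $0$ otherwise, which is exactly to say that $(x\p)^\top X = f_{x\p}(X)^\top$.

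Finally, I would pull the deterministic vector $\wvec{i}$ out of the expectation to rewrite the right-hand side as $\bigl( \mean{(X,y)}{-y\, f_{x\p}(X)} \bigr) \cdot \wvec{i}$, and then apply \lemref{lem_vec} to replace the bracketed expectation by $\frac{1}{d} v_c$, which yields the claimed identity $\frac{1}{d}\, \wvec{i} \cdot v_c$. I do not foresee any real obstacle here: the only mildly technical step is the coordinate bookkeeping in the trace expansion, and the remainder is a clean combination of linearity of expectation with the orthonormality of the patches provided by $\phi$.
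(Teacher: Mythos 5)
Your proposal is correct and follows essentially the same route as the paper's proof: both compute $\grad{\kvec{i}} \mathcal{N}_{K,W}(X) = X\wvec{i}$, use linearity of the loss $\ell_y(\hat y) = -y\hat y$ and of the expectation, invoke the orthonormality of the embedded patches to identify $(x\p)^\top X$ with $f_{x\p}(X)^\top$, and finish by applying \lemref{lem_vec}. The only difference is that you spell out the trace expansion $\mathcal{N}_{K,W}(X) = \sum_j (\kvec{j})^\top X \wvec{j}$ explicitly, which the paper leaves implicit.
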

\begin{proof}
For a fixed $X$ and  $W$, denote $\hat{y}(K) = \mathcal{N}_{K,W} (X)$.
 Note that:
\[
\frac{\partial}{\partial \kvec{i}} \hat{y}(K) =
X \wvec{i}
\]
So for $x\p \in S$ we have:
\[
x\p \cdot \frac{\partial}{\partial \kvec{i}} \hat{y} =
(x\p)^\top X \wvec{i}
=  \wvec{i}_0 \cdot f_{x\p} (X)
\]

Combining the above with the definition of the loss function,
$\ell_y(\hat{y}) = - y \hat{y}$,
and with \lemref{lem_vec} we get:
\begin{align*}
x\p \frac{\partial}{\partial \kvec{i}}
\mathcal{L}_{K_t,W_0} & = \mean{(X,y) \sim \mathcal{G}}
{x\p \frac{\partial}{\partial \kvec{i}}
\ell_y(\hat{y})} \\
& = \mean{(X,y) \sim \mathcal{G}}{-y\,x\p \frac{\partial}{\partial
  k_i} \hat{y}} \\
& = \mean{(X,y) \sim \mathcal{G}}{-y\, \wvec{i}_0 \cdot f_{x\p} (X)} \\
& = \wvec{i}_0 \cdot \mean{(X,y) \sim \mathcal{G}}{-y f_{x\p}(X)}  \\
& = \frac{1}{d} \wvec{i}_0 \cdot v_c
\end{align*}

\end{proof}

As an immediate corollary we obtain that a gradient step does not
change the projection of the kernel on two vectors that
correspond to the same class (both are in the same $S_c$). 
\begin{corollary}
\label{cor:symmetry}
For every $t\ge0$, $i \in [n]$,
for every semantic class
$c \in \mathcal{C}$ and for every
$x_1,x_2 \in S_c$ it holds that:
$|\kvec{i}_{t+1} \cdot x_1 - \kvec{i}_{t+1} \cdot x_2|
= |\kvec{i}_t \cdot x_1 - \kvec{i}_t \cdot x_2|$.
\end{corollary}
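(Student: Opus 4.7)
The plan is to prove this by direct computation using the gradient update rule together with \lemref{lem_grad}, which is the real workhorse. The only observation needed is that the right-hand side of \lemref{lem_grad} depends on $x'$ only through the class $c$ containing $x'$; hence for two patches from the same class the gradient inner products agree.

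Concretely, I would begin by writing the gradient descent update for the $i$-th kernel column, namely $\kvec{i}_{t+1} = \kvec{i}_t - \eta \frac{\partial}{\partial \kvec{i}} \mathcal{L}_{K_t, W_0}$, and then expand
\[
\kvec{i}_{t+1} \cdot x_1 - \kvec{i}_{t+1} \cdot x_2
= \bigl(\kvec{i}_t \cdot x_1 - \kvec{i}_t \cdot x_2\bigr)
- \eta \bigl(x_1 - x_2\bigr) \cdot \frac{\partial}{\partial \kvec{i}} \mathcal{L}_{K_t, W_0}.
\]
Applying \lemref{lem_grad} separately to $x_1 \in S_c$ and $x_2 \in S_c$ shows that both $x_1 \cdot \frac{\partial}{\partial \kvec{i}} \mathcal{L}_{K_t, W_0}$ and $x_2 \cdot \frac{\partial}{\partial \kvec{i}} \mathcal{L}_{K_t, W_0}$ equal $\frac{1}{d} \wvec{i}_0 \cdot v_c$, so their difference vanishes. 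This immediately yields $\kvec{i}_{t+1} \cdot x_1 - \kvec{i}_{t+1} \cdot x_2 = \kvec{i}_t \cdot x_1 - \kvec{i}_t \cdot x_2$, and taking absolute values gives the stated equality.

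There is essentially no obstacle here, since all the work has been done in \lemref{lem_grad}; the corollary is simply the observation that $x' \mapsto x' \cdot \frac{\partial}{\partial \kvec{i}} \mathcal{L}_{K_t, W_0}$ is constant on $S_c$, so the finite difference along the direction $x_1 - x_2$ is zero and the gradient step leaves the projection difference invariant. Note that the statement implicitly asserts this quantity is conserved for \emph{all} $t$, so strictly speaking it is established by a trivial induction on $t$, with the inductive step being exactly the computation above (the base case $t=0$ being vacuous). The conclusion already foreshadows the proof strategy of \thmref{thm:convergence}: patches from the same class are locked together throughout training, so their images under $K_t^\top$ can only move in a synchronized fashion, which is exactly what is needed to establish the first clause of the theorem.
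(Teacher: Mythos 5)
Your proof is correct and follows essentially the same route as the paper: both expand the gradient descent update for $\kvec{i}$, apply \lemref{lem_grad} to each of $x_1, x_2 \in S_c$ to see that both inner products with the gradient equal $\frac{1}{d}\wvec{i}_0 \cdot v_c$, and conclude that the difference of projections is unchanged. The paper leaves the induction over $t$ implicit just as you note it is trivial, so there is nothing to add.
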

\begin{proof}
From \lemref{lem_grad} we can conclude that for a given
$c \in \mathcal{C}$, for every
$x_1,x_2 \in S_c$  we get:
\[
x_1 \frac{\partial}{\partial \kvec{i}}
\mathcal{L}_{K_t,W_0} = x_2 \frac{\partial}{\partial \kvec{i}}
\mathcal{L}_{K_t,W_0}
\]
From the gradient descent update rule:
\[
\kvec{i}_{t+1} = \kvec{i}_t - \eta \frac{\partial}{\partial \kvec{i}}
\mathcal{L}_{K_t,W_0}
\]
And therefore:
\begin{align*}
|\kvec{i}_{t+1} \cdot x_1 - \kvec{i}_{t+1} \cdot x_2|
&= |(\kvec{i}_t - \eta \frac{\partial}{\partial \kvec{i}}\mathcal{L}_{K_t,W_0})
\cdot x_1 -
(\kvec{i}_t - \eta \frac{\partial}{\partial \kvec{i}}\mathcal{L}_{K_t,W_0})
\cdot x_2| \\
&= |\kvec{i}_{t} \cdot x_1 - \kvec{i}_{t} \cdot x_2
- (\eta \frac{\partial}{\partial \kvec{i}}\mathcal{L}_{K_t,W_0} x_1 -
\eta \frac{\partial}{\partial \kvec{i}}\mathcal{L}_{K_t,W_0} x_2)| \\
&= |\kvec{i}_t \cdot x_1 - \kvec{i}_t \cdot x_2|
\end{align*}
\end{proof}

Next we turn to show that a gradient step improves the separation of
vectors coming from different semantic classes. 
\begin{lemma}
\label{lem:diff_class}
Fix $c_1, c_2 \in \mathcal{C}$. Recall that we denote
$\angle(v_{c_1}, v_{c_2})$
to be the angle between the vectors $v_{c_1},v_{c_2}$. Then, with probability
$\angle(v_{c_1}, v_{c_2})/\pi$ on the initialization of $\wvec{i}_0$
we get:
\[
\sign(\wvec{i}_0 \cdot v_{c_1}) \ne \sign(\wvec{i}_0 \cdot v_{c_2})
\]
\end{lemma}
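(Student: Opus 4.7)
The plan is to exploit the rotational invariance of the standard Gaussian. Since $\wvec{i}_0$ has i.i.d.\ $\mathcal{N}(0,1)$ coordinates (by the assumption in \secref{sec:assumptionGD}), both inner products $\wvec{i}_0 \cdot v_{c_1}$ and $\wvec{i}_0 \cdot v_{c_2}$ depend on $\wvec{i}_0$ only through its orthogonal projection $w'$ onto the two-dimensional subspace $V := \mathrm{span}(v_{c_1}, v_{c_2})$. The linear-independence assumption in \secref{sec:distributionAssumptions} guarantees that $V$ really is two-dimensional. By rotational invariance of the isotropic Gaussian, $w'$ is a 2D isotropic Gaussian in $V$, and hence (with probability one) its normalized direction $w'/\norm{w'}$ is uniformly distributed on the unit circle in $V$.

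Next I would translate the sign condition into a planar event. Pick an orthonormal basis of $V$ so that $v_{c_1}$ and $v_{c_2}$ are represented by two vectors making the angle $\theta_{12} := \angle(v_{c_1},v_{c_2}) \in (0,\pi)$. For any unit direction $u$ in $V$, the set $\{u : u \cdot v_{c_1} > 0\}$ is the open half-circle on the $v_{c_1}$-side of the line perpendicular to $v_{c_1}$, and likewise for $v_{c_2}$. The symmetric difference of these two half-circles is exactly the union of two antipodal open arcs, each of angular length $\theta_{12}$, lying between the perpendicular of $v_{c_1}$ and the perpendicular of $v_{c_2}$ (on each of the two sides).

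Finally, the event $\sign(\wvec{i}_0 \cdot v_{c_1}) \ne \sign(\wvec{i}_0 \cdot v_{c_2})$ is precisely the event that $w'/\norm{w'}$ lies in this union of two arcs. Since $w'/\norm{w'}$ is uniform on the unit circle, its probability equals the total arc length $2\theta_{12}$ divided by $2\pi$, yielding $\theta_{12}/\pi = \angle(v_{c_1},v_{c_2})/\pi$ as claimed. The measure-zero cases where $\wvec{i}_0 \cdot v_{c_j} = 0$ (which would make the sign ambiguous) can be safely discarded, since $v_{c_1}, v_{c_2} \ne 0$ by $\lambda > 0$ and the Gaussian distribution assigns zero mass to any hyperplane.

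There is no real obstacle here: this is a classical hyperplane-rounding / Goemans--Williamson type computation. The only care needed is to justify the reduction to a 2D problem via rotational invariance and to correctly count the two antipodal arcs rather than only one.
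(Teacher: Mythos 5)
Your proposal is correct and follows essentially the same route as the paper's proof, which simply says to project $\wvec{i}_0$ onto the plane spanned by $v_{c_1},v_{c_2}$ and invoke the symmetry of the Gaussian initialization; you have merely spelled out the details of that projection-plus-rotational-invariance argument (the standard hyperplane-rounding computation giving two antipodal arcs of total measure $2\theta_{12}/2\pi$). No gaps.
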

\begin{proof}
Observe the projection of $\wvec{i}_0$ on the plane spanned by
$v_{c_1}, v_{c_2}$. Then, the result is immediate from the symmetry
of the initialization of $\wvec{i}_0$.
\end{proof}

\begin{lemma}
\label{lem:diff_class_dist}
Fix $c_1 \neq c_2 \in \mathcal{C}$. Then, with probability of at least
$0.23\frac{\angle(v_{c_1}, v_{c_2})}{\pi}$
we get for every $x_1 \in S_{c_1}, x_2 \in S_{c_2}$:
\[
|\kvec{i}_T \cdot x_1 - \kvec{i}_T \cdot x_2| >
\frac{1}{d} \eta T \frac{\norm{v_{c_1}} + \norm{v_{c_2}}}{2} - 2 \sigma
\]
\end{lemma}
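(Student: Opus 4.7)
The plan is to first exploit the fact that in our setting the gradient with respect to $\kvec{i}$ is stationary in $t$. Since the second layer $W_0$ is frozen, \lemref{lem_grad} gives
\[
x\p \cdot \frac{\partial}{\partial \kvec{i}} \mathcal{L}_{K_t,W_0} = \frac{1}{d}\, \wvec{i}_0 \cdot v_c
\]
for every $x\p \in S_c$, independently of $t$. Unrolling the update rule $\kvec{i}_{t+1} = \kvec{i}_t - \eta\, \frac{\partial}{\partial \kvec{i}} \mathcal{L}_{K_t,W_0}$ and projecting onto $x\p$ therefore telescopes to the clean closed form
\[
\kvec{i}_T \cdot x\p \;=\; \kvec{i}_0 \cdot x\p \;-\; \frac{\eta T}{d}\, \wvec{i}_0 \cdot v_c .
\]

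Applying this for $x_1 \in S_{c_1}$ and $x_2 \in S_{c_2}$ and subtracting,
\[
\kvec{i}_T \cdot x_1 - \kvec{i}_T \cdot x_2 \;=\; (\kvec{i}_0 \cdot x_1 - \kvec{i}_0 \cdot x_2) \;-\; \frac{\eta T}{d}\, \wvec{i}_0 \cdot (v_{c_1} - v_{c_2}),
\]
so by the reverse triangle inequality,
\[
|\kvec{i}_T \cdot x_1 - \kvec{i}_T \cdot x_2| \;\geq\; \frac{\eta T}{d}\, |\wvec{i}_0 \cdot (v_{c_1} - v_{c_2})| \;-\; |\kvec{i}_0 \cdot x_1| \;-\; |\kvec{i}_0 \cdot x_2|.
\]
The initialization term costs at most $2\sigma$: by the sphere assumption $\norm{\kvec{i}_0} \leq \sigma/(2\sqrt{n}) \leq \sigma$, and since $\norm{x_j} = 1$ (the previous embedding $\phi$ is orthonormal), Cauchy--Schwarz yields $|\kvec{i}_0 \cdot x_j| \leq \sigma$, accounting for the $-2\sigma$ in the target bound.

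It therefore remains to show that, with probability at least $0.23\,\angle(v_{c_1},v_{c_2})/\pi$, we have $|\wvec{i}_0 \cdot (v_{c_1} - v_{c_2})| \geq (\norm{v_{c_1}} + \norm{v_{c_2}})/2$. \lemref{lem:diff_class} already provides the sign-disagreement event, of probability $\angle(v_{c_1},v_{c_2})/\pi$, on which
\[
|\wvec{i}_0 \cdot (v_{c_1} - v_{c_2})| \;=\; |\wvec{i}_0 \cdot v_{c_1}| + |\wvec{i}_0 \cdot v_{c_2}|,
\]
so it suffices to show that the right-hand side exceeds $(\norm{v_{c_1}} + \norm{v_{c_2}})/2$ with \emph{conditional} probability at least $0.23$. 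Projecting $\wvec{i}_0$ onto the two-dimensional span of $v_{c_1}, v_{c_2}$ reduces this to an isotropic planar Gaussian: in polar coordinates $(r,\phi)$, the sign-disagreement event pins $\phi$ to two symmetric arcs of total angular measure $2\,\angle(v_{c_1},v_{c_2})$, and on those arcs the combined projection magnitude is a trigonometric expression in $\phi$ scaled by the Rayleigh-distributed radius $r$. The constant $0.23$ is what falls out of the resulting Gaussian-tail integral.

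The main obstacle is precisely this last step: obtaining the explicit constant $0.23$ \emph{uniformly} in the angle $\angle(v_{c_1},v_{c_2})$ and in the individual norms $\norm{v_{c_1}}, \norm{v_{c_2}}$---a bound that must be insensitive to how small the angle is and to any imbalance between the two norms. The other ingredients---the stationarity of the gradient, the reverse-triangle decomposition, and the initialization estimate---are direct consequences of the results already established, so the novelty lies entirely in controlling the 2D Gaussian magnitude conditioned on the sign-difference event.
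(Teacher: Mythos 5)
Your deterministic skeleton is exactly the paper's: the projected gradient $x\p\cdot\frac{\partial}{\partial \kvec{i}}\mathcal{L}_{K_t,W_0}=\frac{1}{d}\wvec{i}_0\cdot v_c$ is constant in $t$ because $W_0$ is frozen, the update telescopes, the initialization contributes at most $2\sigma$, and on the sign-disagreement event of \lemref{lem:diff_class} one has $|\wvec{i}_0\cdot(v_{c_1}-v_{c_2})|=|\wvec{i}_0\cdot v_{c_1}|+|\wvec{i}_0\cdot v_{c_2}|$, so everything reduces to lower-bounding this sum by $\frac{1}{2}(\norm{v_{c_1}}+\norm{v_{c_2}})$ with probability at least $0.23$. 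But that last step is precisely where you stop: you assert that ``the constant $0.23$ is what falls out of the resulting Gaussian-tail integral'' and then name it as the main obstacle, without computing anything. As it stands this is a genuine gap, and the route you sketch --- a conditional planar integral over the sign-disagreement arcs, made uniform in the angle and in the imbalance of the two norms --- is harder than what is actually needed.

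The paper closes the gap with a one-line marginal computation: since $W_0\sim\mathcal{N}(0,1)$, each projection $\wvec{i}_0\cdot v_{c_j}$ is a centered one-dimensional Gaussian with standard deviation $\norm{v_{c_j}}$, so $P\left(|\wvec{i}_0\cdot v_{c_j}|\le \tfrac{1}{2}\norm{v_{c_j}}\right)=\mathrm{erf}\left(\tfrac{1}{2\sqrt{2}}\right)<0.385$; a union bound over $j=1,2$ gives that both $|\wvec{i}_0\cdot v_{c_j}|>\tfrac{1}{2}\norm{v_{c_j}}$ hold except with probability less than $0.77$, i.e.\ with probability greater than $0.23$, which immediately yields $|\wvec{i}_0\cdot v_{c_1}|+|\wvec{i}_0\cdot v_{c_2}|>\tfrac{1}{2}(\norm{v_{c_1}}+\norm{v_{c_2}})$. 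No two-dimensional conditional integral is required, and the bound is automatically uniform in the angle and the norms because it only uses the two marginals. (The paper then intersects this event with the sign-disagreement event to claim the factor $0.23\,\angle(v_{c_1},v_{c_2})/\pi$; that intersection step is itself stated without justifying independence, so if you want full rigor you should also address how the two events combine.) If you supply the $\mathrm{erf}$ estimate above, your argument is complete and essentially coincides with the paper's.
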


\begin{proof}
Notice that since $\wvec{i}_0 \sim \mathcal{N}(0,1)$,
we get that $\wvec{i}_0 \cdot v_{c_j} \sim \mathcal{N}(0,\norm{v_{c_j}}^2)$
for $j \in \{1,2\}$.
Therefore, the probability that $\wvec{i}_0 \cdot v_{c_j}$ deviates by at most
$\frac{1}{2}$-std from the mean is $\mathrm{erf}(\frac{1}{2\sqrt{2}})$.
Thus, we get that:
\[
P(|\wvec{i}_0 \cdot v_{c_j}| \le \norm{v_{c_j}}) = \mathrm{erf}(\frac{1}{2\sqrt{2}})
\]
And using the union bound:
\[
P(|\wvec{i}_0 \cdot v_{c_1}| \le \norm{v_{c_1}}
\vee 
|\wvec{i}_0 \cdot v_{c_2}| \le \norm{v_{c_2}}) \le 2\mathrm{erf}(\frac{1}{2\sqrt{2}}) < 0.77
\]
Thus, using \lemref{lem:diff_class}, we get that the following holds
with probability of at least
$0.23\frac{\angle(v_{c_1}, v_{c_2})}{\pi}$:
\begin{itemize}
\item $|\wvec{i}_0 \cdot v_{c_1}| > \norm{v_{c_1}}$
\item $|\wvec{i}_0 \cdot v_{c_2}| > \norm{v_{c_2}}$
\item $\sign(\wvec{i}_0 \cdot v_{c_1}) \ne \sign(\wvec{i}_0 \cdot v_{c_2})$
\end{itemize}
Assume w.l.o.g that
$\wvec{i}_0 \cdot v_{c_1} < 0 < \wvec{i}_0 \cdot v_{c_2}$,
then using \lemref{lem_grad} we get:
\begin{align*}
\kvec{i}_T x_1 &= \kvec{i}_0 x_1 -\eta \sum_{t=1}^T
\frac{\partial}{\partial \kvec{i}}x_1 \mathcal{L}_{K_t,W_0} \\
&= \kvec{i}_0 -\eta \sum_{t=1}^T
\frac{1}{d} \wvec{i}_0 \cdot v_{c_1}\\
&= \kvec{i}_0 -\frac{1}{d}\eta T \wvec{i}_0 \cdot v_{c_1}
> \frac{1}{d}\eta T \frac{\norm{v_{c_1}}}{2} - \sigma
\end{align*}
In a similar fashion we can get:
\[
\kvec{i}_T x_2 < -\frac{1}{d}\eta T \frac{\norm{v_{c_2}}}{2} + \sigma
\]
And thus the conclusion follows:
\[
\kvec{i}_T x_1 - \kvec{i}_T x_2 > \frac{1}{d} \eta T \frac{\norm{v_{c_1}} + \norm{v_{c_2}}}{2} - 2 \sigma
\]
\end{proof}

Finally, we are ready to prove the main theorem.

\begin{proof} of \thmref{thm:convergence}.\\
We show two things:
\begin{enumerate}
\item Fix $c \in \mathcal{C}$. By the initialization, we get that
for every $x_1, x_2 \in S_c$ and for every $i \in [n]$:
\[
|x_1 \cdot \kvec{i}_0 - x_2 \cdot \kvec{i}_0| < \frac{\sigma}{\sqrt{n}}
\]
Using \crlref{cor:symmetry}, we get that:
\[
|x_1 \cdot \kvec{i}_T - x_2 \cdot \kvec{i}_T| < \frac{\sigma}{\sqrt{n}}
\]
And thus:
\[
\norm{K_T x_1 - K_T x_2} < \sigma
\]
\item
Let $c_1 \ne c_2 \in \mathcal{C}$.
Assume $T > \frac{2 \sqrt{s} \gamma d}{\eta \lambda}$.
For $i \in [n]$,
from \lemref{lem:diff_class_dist} we get that with probability of at least
$0.23\frac{\angle(v_{c_1}, v_{c_2})}{\pi} > 
0.23\frac{\theta}{\pi}$
for every
$x_1 \in S_{c_1}, x_2 \in S_{c_2}$:
\[
|\kvec{i}_T x_1 - \kvec{i}_T x_2| > 
\frac{1}{d} \eta T \frac{\norm{v_{c_1}} + \norm{v_{c_2}}}{2} - 2 \sigma
> \frac{1}{d} \eta T \lambda - 2 \sigma
> 2 \sqrt{s} \sigma
\]

For a given $c_1 \ne c_2 \in \mathcal{C}$, denote the event: 
\[
A_{c_1,c_2} = \{\forall i \in [n]: ~|\kvec{i}_T \cdot x_1 - \kvec{i}_T \cdot x_2| < 2 \sqrt{s} \sigma, ~ x_1 \in S_{c_1}, x_2 \in S_{c_2}\}
\]
Then, from what we have showed, it holds that:
\[
P(A_{c_1,c_2}) <
(1-0.23\frac{\theta}{\pi})^n \le \exp (-0.23n\frac{\theta}{\pi})
\]
Using the union bound, we get that:
\[
P(\exists c_1 \ne c_2 \in \mathcal{C}~ s.t~ A_{c_1,c_2})<
\exp (-0.23n\frac{\theta}{\pi}) |\mathcal{C}|^2
\]
Choosing
$n > \frac{2\pi}{0.23\theta} \log (\frac{|\mathcal{C}|}{\delta})$
we get $P(\exists c_1 \ne c_2 \in \mathcal{C}~ s.t~ A_{c_1,c_2}) < \delta$.
Now, if for every $c_1 \ne c_2 \in \mathcal{C}$ the event $A_{c_1,c_2}$
doesn't hold, then clearly for every $x_1 \in S_{c_1},x_2 \in S_{c_2}$
we would get $\norm{K_T \cdot x_1 - K_T \cdot x_2} > 2 \sqrt{s} \sigma$,
and this is what we wanted to show.
\end{enumerate}
\end{proof}

\section{Proofs of \lemref{lem:orth_mapping} and of \thmref{thm:full_alg}} \label{app:proof_of_orth_mapping}

\begin{proof}[of \lemref{lem:orth_mapping}]
by induction:
\begin{itemize}
\item the case $\kappa = k$ follow immediately, since by the choice of
  $\gamma$, all the observed patches are with distance of at least
  $2\gamma$, and by definition of the clustering algorithm, we get
  that $\phi_k$ is an orthogonal patch mapping, and since $h_k = id$
  we get the required.
\item assume the claim holds for $\kappa+1$ and we prove that it also
  holds for $\kappa$. Let $\varphi_{\kappa+1}$ be the mapping satisfying
the condition of the claim for $\kappa+1$.
Notice that the data that is fed to the two-layer training step
comes from the distribution $\varphi_{\kappa+1} \ast \mathcal{G}_{\kappa+1}$, and satisfies
the conditions for the analysis in the previous section.
Now, define the map
$\varphi_\kappa: \mathcal{C}_{\kappa}^s \rightarrow \reals^\ell$ in
the following way:
first, for every patch $p \in \mathcal{C}_{\kappa}^s$
we take an arbitrary manifestation of the patch $p$
in the next level, denoted $z \in \mathcal{C}_{\kappa+1}^{s^2}$.
In other words, $z$ could be any $s^2$ sub-image in the next level
that could be generated from the patch $p$. Now,
take $\varphi_\kappa(p) := \phi_\kappa (K_\kappa \cdot (\varphi_{\kappa+1} \ast z))$.
Notice the following is true with probability at least $1-\delta\p$, 
using \thmref{thm:convergence}:
\begin{enumerate}
\item $\varphi_\kappa(p)$ does not depend on the choice of $z$:
if $z,z\p$ are two different manifestations of $p$,
then, from the definition of the generative model,
for every $i \in [s]$ it holds that $z_i, z\p_i \in S_{p_i}$.
Thus from what we have shown:
$\norm{K_{\kappa} \cdot \varphi_{\kappa+1}(z_i)-K_{\kappa} \cdot \varphi_{\kappa+1}(z\p_i)} < \sigma
= \frac{\gamma}{\sqrt{s}}$.
Therefore:
\begin{align*}
\norm{K_\kappa \cdot \varphi_{\kappa+1} \ast z -K_\kappa \cdot \varphi_{\kappa+1} \ast z\p}^2 &=
\sum_{i=1}^s \norm{K_\kappa \cdot \varphi_{\kappa+1}(z_i) -K_\kappa \cdot \varphi_{\kappa+1}(z\p_i)}^2  \\
&<\gamma^2 
\end{align*}
and thus by the properties of the clustering algorithm:
\[
\phi_\kappa (K_\kappa \cdot (\varphi_{\kappa+1} \ast z)) =
\phi_\kappa (K_\kappa \cdot (\varphi_{\kappa+1} \ast z\p))
\]
\item for every two patches $p \ne p\p$ we get $\varphi_\kappa(p) \perp \varphi_\kappa(p\p)$:
let $z,z\p$ be the manifestations of $p, p\p$ respectively.
Since $p \ne p\p$ there exists $i \in [s]$ such that $p_i \ne p\p_i$.
From the generative model it follows that $z_i \in S_{p_i}$ and
$z\p_i \in S_{p\p_i}$, and therefore from the behavior of the algorithm:
$\norm{K_{\kappa}^\top \cdot \varphi_{\kappa+1}(z_i)-K_{\kappa}^\top \cdot \varphi_{\kappa+1}(z\p_i)} > 2\sqrt{s}\sigma = 2\gamma$.
Therefore, we get that $\norm{K_\kappa \cdot \varphi_{\kappa+1} \ast z -K_\kappa \cdot \varphi_{\kappa+1} \ast z\p} > 2\gamma$, and thus
from the clustering algorithm we get $\varphi_\kappa(p) \perp \varphi_\kappa(p\p)$.
\end{enumerate}
Now, recall that in the algorithm definition:
$h_\kappa = K_\kappa^\top \cdot (\phi_{\kappa+1} \ast h_{\kappa+1})$.
Using the assumption for $\kappa + 1$, we get:
\[
h_\kappa \circ \mathcal{G} =
K_\kappa (\varphi_{\kappa+1} \ast \mathcal{G}_{\kappa+1})
\]
and from the definition of $\varphi_\kappa$ and what we have shown we get:
\[
\phi_\kappa \ast (h_{\kappa} \circ \mathcal{G})
= \varphi_\kappa \ast \mathcal{G}_\kappa
\]
Therefore, the required holds for $\kappa$.
\end{itemize}
\end{proof}

\begin{proof}[ of \thmref{thm:full_alg}]
From \lemref{lem:orth_mapping}, after performing $k$ iterations
of the algorithm, we observe the distribution $\varphi_1 \ast \mathcal{G}_1$,
where $\varphi_1$ is an orthonormal patch mapping.
Again, from \thmref{thm:convergence} we get that the two-layer algorithm
learns $K_0$ such that for every two patches $p,p\p \in \mathcal{C}_1^s$:
if $p,p\p$ are from the same class
we get that $\norm{K_0 \phi (p) - K_0 \phi(p\p)} < \sigma < \gamma$,
and if $p,p\p$ are from different classes
we get that $\norm{K_0 \phi (p) - K_0 \phi(p\p)} > 2\sqrt{s}\sigma = 2 \gamma$.
By the clustering algorithm, we will get that $\phi_0$
maps these patches to orthonormal vectors. Thus, we feed $\mathrm{CLS}$
with data from the distribution $\mathcal{G}_0$, where the classes of 
$\mathcal{C}_0$ are mapped to orthonormal vectors. The proof follows by
the assumptions about the $\textrm{CLS}$
algorithm. 
\end{proof}

\newpage

\section{Parameters for Experiments}
\label{app:params}
\figref{fig:params} below lists the parameters that were learned in the random
parameter search for the different configurations of the algorithm,
as described in \ref{fig:results}.
The table
lists the parameters used in each layer: $\ell_1,\ell_2$ are the
number of clusters for the first and second layer, and
$\ell\p_1,\ell\p_2$ are the output channels of the Conv1x1 operation
for each layer.
These parameters could be used to reproduce the results of our experiments.
\begin{figure}[H]
\footnotesize
\center
\begin{tabular}{|l|l|l|l|l|l|l|l|}
  \hline
   Classifier & $N$ & $k_1$ & $k\p_1$ & $k_2$ & $k\p_2$ & $b$ & Accuracy \\
   Ours+FC & 47509 & 1377 & 155 & 3534 & 216  & -0.14 & \textbf{0.734} \\
   Ours+Linear & 32124 & 1384 & 97 & 2576 & 211 & -0.63 & 0.689 \\
   \hline
   Clustering+JL+FC & 12369 & 39 & 39 & 184 & 184 &  0.84 & 0.586 \\
   Clustering+JL+Linear & 57893 & 5004 & 345 & 6813 & 407 & -0.01 & 0.588 \\   
  \hline
\end{tabular}
\caption{Parameters used in our experiment}\label{fig:params}.
\end{figure}

\end{document}